\newtheorem*{theorem*}{Theorem}
\newtheorem*{definition*}{Definition}
\newtheorem*{lemma*}{Lemma}
\newtheorem*{example*}{Example}
\newtheorem*{proposition*}{Propsotion}
\newtheorem*{corollary*}{Corollary*}
\newtheorem{theorem}{Theorem}
\newtheorem{lemma}{Lemma}
\newtheorem{corollary}{Corollary}
\newcommand{\kakko}[1]{\left( #1 \right)}
\newcommand{\ckakko}[1]{\left\{ #1 \right\}}
\newcommand{\dkakko}[1]{\left[ #1 \right]}
\newcommand{\nkakko}[1]{\left\| #1 \right\|}
\newcommand{\zkakko}[1]{\left| #1 \right|}
\newcommand{\D}{\mathrm{d}}
\DeclareMathOperator*{\minimize}{\text{minimize} \quad}
\newcommand{\rank}[1]{\operatorname{rank}\left( #1 \right)}
\newcommand{\sign}[1]{\operatorname{sign} ( #1 )}
\newcommand{\abs}[1]{| #1 |}
\newcommand{\group}{{\mathcal{G}}}
\newcommand{\groupt}{{\mathcal{G}}}
\newcommand{\betah}{\beta}
\newcommand{\betag}{\beta^{\mathcal{G}}}
\newcommand{\betaz}{\beta^{\mathcal{G}}_{-0}}
\newcommand{\betat}{\beta}
\newcommand{\betatg}{\beta^{\mathcal{G}}}
\newcommand{\betatz}{\beta^{\mathcal{G}}_{-0}}
\newcommand{\Xg}{X^{\mathcal{G}}}
\newcommand{\xg}{x^{\mathcal{G}}}
\newcommand{\Xz}{X^{\mathcal{G}}_{-0}}
\newcommand{\Xtg}{X^{\mathcal{G}}}
\newcommand{\xtg}{x^{\mathcal{G}}}
\newcommand{\Xtz}{X^{\mathcal{G}}_{-0}}
\newcommand{\Xzt}{X^{\mathcal{\tilde{G}}}_{-0}}
\newcommand{\tila}{b}
\newcommand{\tilq}{q}
\newcommand{\gmin}{\underline{g}}
\newcommand{\gmax}{\overline{g}}
\title{Efficient Path Algorithms for Clustered Lasso and OSCAR}
\author{%
	Atsumori Takahashi \\
	The Institute of Statistical Mathematics \\
	\texttt{takahashi.atsumori@ism.ac.jp} \\
	 \And
	 Shunichi Nomura \\
	 The Institute of Statistical Mathematics \\
	 \texttt{nomura@ism.ac.jp} \\
}
\begin{document}
\maketitle

\begin{abstract}
In high dimensional regression, feature clustering by their effects on outcomes is often as important as feature selection.
For that purpose, clustered Lasso and octagonal shrinkage and clustering algorithm for regression (OSCAR)
are used to make feature groups automatically by pairwise $L_1$ norm and pairwise $L_\infty$ norm, respectively.
This paper proposes efficient path algorithms for clustered Lasso and OSCAR to construct solution paths with respect to their regularization parameters.
Despite too many terms in exhaustive pairwise regularization, their computational costs are reduced by using symmetry of those terms.
Simple equivalent conditions to check subgradient equations in each feature group are derived by some graph theories.
The proposed algorithms are shown to be more efficient than existing algorithms in numerical experiments.
\end{abstract}

\section{Introduction}
With the increasing prevalence of high-dimensional data in many fields in recent years, 
feature selection and clustering have become increasingly important. 
Lasso~\cite{tibshirani1996regression}, 
and its variants~\cite{tibshirani2005sparsity, hoefling2010path, tibshirani2011} 
have been developed as sparse regularization techniques for that purpose. 
This paper focuses on two kinds of feature-clustering regularization without any prior information on feature groups. 
One is clustered Lasso~\cite{she2010} formulated by
\begin{equation} \label{clusteredlasso}
\minimize_{\beta}\ \ \frac{1}{2} \nkakko{y-X\beta}^2 +\lambda_1 \sum_{i=1}^{p} \abs{\beta_i} + \lambda_2 \sum_{j<k} \zkakko{\beta_j - \beta_k}, 
\end{equation}
where $y \in \mathbb{R}^n$ is a response vector, $X \in \mathbb{R}^{n\times p}$ is a design matrix, $\beta \in \mathbb{R}^p$ is a coefficient vector 
and $\lambda_{1}, \lambda_{2}$ are regularization parameters. 
The last term enforces coefficients to be similar or equal. 
The other is octagonal shrinkage and clustering algorithm for regression (OSCAR)~\cite{bondell2008simultaneous} defined by
\begin{equation} \label{OSCAR}
\minimize_{\beta}\ \ \frac{1}{2} \nkakko{y-X\beta}^2 + \lambda_1 \sum_{i=1}^{p} \abs{\beta_i} + \lambda_2 \sum_{j<k} \max \ckakko{\abs{\beta_j}, \abs{\beta_k}},
\end{equation}
where the pairwise $L_{\infty}$ norm encourages absolute values of highly correlated coefficients to be zero or equal. 
Note that the pairwise $L_{\infty}$ norm in \eqref{OSCAR} can be converted into $L_1$ norm as 
$\max \ckakko{\abs{\beta_j}, \abs{\beta_k}} = (\abs{\beta_j- \beta_k} + \abs{\beta_j + \beta_k})/2$ and thus
OSCAR and clustered Lasso can be regarded as special cases of generalized Lasso~\cite{tibshirani2011}.

Fast methods to obtain a solution of coefficients $\beta$ with a fixed point of $[\lambda_{1}, \lambda_{2}]$ have been developed 
for clustered Lasso~\cite{lin2019efficient} and OSCAR~\cite{zhong2012efficient,bogdan2015, luo2019efficient}. 
However, on tuning the regularization parameters, a solution path of $\beta$ in a continuous range of $[\lambda_{1}, \lambda_{2}]$ is more preferable than 
a grid search on discrete values of $[\lambda_{1}, \lambda_{2}]$.
Algorithms to obtain such solution paths are called path algorithms and proposed in more general settings~\cite{tibshirani2011, zhou2013algorithms, arnold2016efficient} than clustered Lasso and OSCAR.
However, due to $p(p-1)/2$ pairwise regularization terms in \eqref{clusteredlasso} and \eqref{OSCAR}, 
those algorithms require too much computational costs on the order of $\mathcal{O}(np^2 \max \{n, p^2\} + T p^2 \max \{n, p^2\})$ 
for clustered Lasso and OSCAR where $T$ is the number of iterations in the algorithms.
In some special cases that, say, $X=I_p$ and $\lambda_1=0$, the solution path becomes simple and can be obtained fast
because the coefficients are only getting merged in turn as $\lambda_2$ increases~\cite{hocking2011clusterpath}.
Those cases can be extended to the weighted clustered Lasso with distance-decreasing weights~\cite{chiquet2017fast}.
An efficient path algorithm which obtains an approximate solution path with an arbitrary accuracy bound around a starting point is 
proposed for OSCAR with a general design matrix~\cite{gu2017groups}.

In this article, we propose novel path algorithms for clustered Lasso and OSCAR with a general design matrix.
The proposed algorithms can construct entire exact solution paths much faster than the existing ones. 
We specify two types of events which make breakpoints in solution paths and one auxiliary type of events in our algorithm.
Especially, we derive efficient methods to specify the event times using symmetry of the regularization terms as described in the later sections.

\section{Path algorithm for clustered Lasso}
In this section, we propose a solution path algorithm for clustered Lasso \eqref{clusteredlasso},
which yields a solution path $\betah(\eta)$ along regularization parameters 
$[\lambda_{1}, \lambda_{2}] = \eta[\bar{\lambda}_1, \bar{\lambda}_2]$ 
controlled by a single parameter $\eta > 0$ with a fixed direction $[\bar{\lambda}_1, \bar{\lambda}_2]$.
Hereafter, we assume 
$n\ge p$ and $\rank{X} = p$ to ensure that the objective function \eqref{clusteredlasso} is strictly convex.
Otherwise, the solution path might not be unique and continuous, making it difficult to track.
In such a case, a ridge penalty term $\varepsilon \nkakko{\beta}^2$ with a tiny weight $\varepsilon$ can be added 
to \eqref{clusteredlasso},
which is equivalent to extending the response vector $y$ and design matrix $X$ into $y^* = [y^\top,$\textrm{\boldmath $0$}$_p^\top]^\top$
and $X^* = [X^\top,\sqrt{\varepsilon} I_p]^\top$, to make the problem strictly convex~\cite{tibshirani2011, hu2015dual, arnold2016efficient, gaines2018algorithms}.

In \eqref{clusteredlasso}, the regularization terms encourage the coefficients to be zero or equal. 
Thus, from the solution $\betah(\eta)$ for a fixed regularization parameter $\eta > 0$, 
we define the set of {\it fused groups} $\group(\eta) = \{G_{\gmin}, G_{\gmin +1}, \dots, G_{\gmax}\}$ and
the {\it grouped coefficients} $\betag(\eta) = [\betag_{\gmin}, \betag_{\gmin +1},\dots,\betag_{\gmax}]^\top \in \mathbb{R}^{\gmax-\gmin+1}$ 
to satisfy the following statements:
\begin{itemize}
                \item $\bigcup_{g=\gmin}^{\gmax} G_g = \ckakko{1, \cdots, p}$, where $G_0$ may be an empty set but others may not.
                \item $\betag_{\gmin}<\cdots < \betag_{-1} < \betag_0 = 0 < \betag_1 < \cdots < \betag_{\gmax}$ and $\betah_i = \betag_g $ for $i\in G_g$.
\end{itemize}
Note that $\gmin \le 0 \le \gmax $ because the group $G_0$ exists as an empty set even if no zeros exist in the entries of $\beta(\eta)$.
Correspondingly, 
we introduce the {\it grouped design matrix} $\Xg = [\xg_{\gmin}, \xg_{\gmin +1},\dots,\xg_{\gmax}] \in \mathbb{R}^{n\times (\gmax-\gmin+1)}$, 
where $\xg_g = \sum_{j \in G_g} x_j$ and $x_j$ is the $j$-th column vector of $X$. 

\subsection{Piecewise linearity} 
Because the objective function of \eqref{clusteredlasso} is strictly convex, its solution path would be a continuous piecewise linear function~\cite{hoefling2010path,tibshirani2011}.
Specifically, as long as the signs and order of the solution $\beta(\eta)$ are conserved as in the set $\group$ of fused groups defined above,
the problem \eqref{clusteredlasso} can be reduced to the following quadratic programming:
\begin{equation*} 
\minimize_{\beta}\ \ \frac{1}{2} \nkakko{y - \Xg \betag}^2 + \eta \bar{\lambda}_1 \sum_{g=\gmin}^{\gmax} p_g \sign{\betag_g} \betag_g + \eta \bar{\lambda}_2 \sum_{g=\gmin}^{\gmax} p_g r_g  \betag_g,
\end{equation*}
where $p_g$ is the cardinality of $G_g$, $q_g = p_{\gmin} + \cdots + p_{g-1}$
is the number of coefficients smaller than $\betag_g$, and $r_g = q_g - (p - q_{g+1})$.
Hence, because $\betag_0$ is fixed at zero, 
the nonzero elements of the grouped coefficients $\betaz(\eta)= [\betag_{\gmin},\dots,\betag_{-1},\betag_1,\dots,\betag_{\gmax}]^\top$ are obtained by
\begin{equation} \label{clusterelasso-theta}
\betaz(\eta) = \dkakko{\kakko{\Xz}^\top \Xz}^{-1} \dkakko{\eta a +\kakko{\Xz}^\top  y}, 
\end{equation}
where $\Xz = [\xg_{\gmin},\dots,\xg_{-1},\xg_1,\dots,\xg_{\gmax}] $ 
and $a = [a_{\gmin},\dots,a_{-1},a_1,\dots,a_{\gmax}]^\top \in \mathbb{R}^{\gmax-\gmin}$, 
$a_g = -\bar{\lambda}_1 p_g \sign{\betag_g} - \bar{\lambda}_2 p_g r_g$.
Thus, the solution path $\beta(\eta)$ moves linearly along $\betag(\eta)$ as defined above until the set $\group$ of fused groups changes.

\subsection{Optimality condition} 

In this subsection, we show the optimality conditions of \eqref{clusteredlasso} and then derive a theorem to check them efficiently.
Owing to $L_1$ norms, the optimality condition involves their subgradients as follows: 
\begin{equation} 
x_i^\top \kakko{\Xg \betag - y} + \tau_{i0} + \lambda_2 r_g +  
\sum_{j \in G_g \setminus\ckakko{i}} \tau_{ij} = 0, \qquad i \in G_g,
\label{clusteredlasso-subgradientequation}
\end{equation}
where $\tau_{i0} \in [-\lambda_1,\lambda_1]$ is a subgradient of $\lambda_1 \zkakko{\beta_i}$
which takes $\lambda_1 \sign{\betag_g}$ if $\betag_g \neq 0$, 
and $\tau_{ij} \in [-\lambda_2,\lambda_2] \; (i\neq j,\; i,j\in G_g) $ is subject to the constraint $\tau_{ij} = -\tau_{ji}$,
which implies that $\tau_{ij}$ is a subgradient of $\lambda_2 \zkakko{\beta_i - \beta_j}$ with respect to $\beta_i$ when $\beta_i = \beta_j$.
For an overview of subgradients, see e.g.~\cite{bertsekas1999nonlinear}.


Though it is not straightforward to check the optimality conditions including subgradients, we can derive their equivalent conditions, which are easier to verify. 
For the group $G_0$ of zeros, if we assume $m=p_0\ge 1$ and denote by $f_1 \ge f_2 \ge \cdots \ge f_m$ the sorted values of $x_i^\top (\Xg \betag - y) + \lambda_2 r_g $ over $i \in G_0$,
we can propose the following theorem to check condition \eqref{clusteredlasso-subgradientequation}.
\begin{theorem} \label{theorem1}
	There exist $\tau_{i0} \in \dkakko{-\lambda_1, \lambda_1}$ and $\tau_{ij} = -\tau_{ji} \in \dkakko{-\lambda_2, \lambda_2}$ such that
	\begin{equation} \label{lemma1-subgradequation}
	f_i + \tau_{i0} + \sum_{j \in \{1,\dots,m\} \setminus\ckakko{i}}\tau_{ij} =0, \qquad i=1,\dots,m,
	\end{equation}
	if and only if	
	\begin{align}
	\sum_{j=1}^{k} f_j &\le \lambda_1 k + \lambda_2k(m-k), \quad & k=1,\dots, m, \label{lowerbound} \\
	\sum_{j=k+1}^{m} f_j &\ge -\lambda_1 (m-k) - \lambda_2 k(m-k), \quad & k=0,\dots, m-1. \label{upperbound}
	\end{align}
\end{theorem}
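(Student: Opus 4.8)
The plan is to read \eqref{lemma1-subgradequation} as a \emph{feasibility} problem for an antisymmetric matrix with bounded entries and prescribed row sums, and then to translate it into a network-flow (circulation) feasibility statement on which a Hall/Hoffman-type cut condition can be imposed. Concretely, I would rewrite the system as $\tau_{i0} + \sum_{j \neq i} \tau_{ij} = -f_i$ and introduce an auxiliary ``ground'' node $0$, so that the question becomes whether a circulation exists on the complete graph over the nodes $\{0, 1, \dots, m\}$ in which each pair $\{i,j\}$ with $i,j \ge 1$ carries a flow $\tau_{ij} \in [-\lambda_2, \lambda_2]$, each pair $\{0,i\}$ carries $\tau_{i0} \in [-\lambda_1, \lambda_1]$, each node $i \ge 1$ has net outflow $-f_i$, and node $0$ has net outflow $\sum_{i} f_i$. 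This last value is forced by the antisymmetry $\tau_{ij} = -\tau_{ji}$, since the row sums of an antisymmetric matrix add to zero.

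For necessity I would argue directly, without any flow theorem. Summing \eqref{lemma1-subgradequation} over $i \in \{1, \dots, k\}$, the interior terms $\tau_{ij}$ with $i,j \le k$ cancel in pairs by antisymmetry, leaving only the $k(m-k)$ crossing terms (each bounded by $\lambda_2$) and the $k$ ground terms (each bounded by $\lambda_1$); this yields \eqref{lowerbound}. Summing instead over $i \in \{k+1, \dots, m\}$ and using the opposite signs of the same bounds gives \eqref{upperbound}. The ordering $f_1 \ge \dots \ge f_m$ is not actually needed here, but it is precisely what makes these contiguous index sets the binding ones.

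For sufficiency I would invoke the feasibility criterion for circulations (Hoffman's circulation theorem, equivalently Gale's supply--demand feasibility theorem): a feasible flow exists if and only if, for every vertex subset $S$, the net supply inside $S$ does not exceed the capacity of the cut $(S, S^c)$. Splitting on whether $0 \in S$ produces two families of cut inequalities, one bounding $\sum_{i \in S'} f_i$ from below and one bounding the complementary sum from above, where the cut capacity is $\lambda_1 |S'| + \lambda_2 |S'|(m-|S'|)$ up to relabeling. The key reduction is that, because $f$ is sorted, for each fixed cardinality the extremal (binding) subset is a contiguous block of the smallest, respectively largest, values, so the exponentially many cut conditions collapse exactly onto the $m$ prefix inequalities \eqref{lowerbound} and the $m$ suffix inequalities \eqref{upperbound}.

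The main obstacle I anticipate is this final reduction together with the bookkeeping introduced by the ground node: one must verify that the interaction between the $\lambda_1$ capacities to node $0$ and the $\lambda_2$ capacities among the data nodes yields exactly the right-hand sides $\lambda_1 k + \lambda_2 k(m-k)$ and $-\lambda_1(m-k) - \lambda_2 k(m-k)$, and that optimizing the supply over subsets of each fixed size is genuinely achieved by the contiguous prefixes and suffixes. A self-contained alternative for sufficiency would replace Hoffman's theorem by an explicit water-filling construction: first choose the ground subgradients $\tau_{i0}$ to absorb as much of each $f_i$ as the box $[-\lambda_1, \lambda_1]$ permits while enforcing $\sum_i \tau_{i0} = -\sum_i f_i$, then realize the residual row sums by an antisymmetric matrix built greedily, using \eqref{lowerbound}--\eqref{upperbound} to certify that no partial sum ever exceeds $\lambda_2 k(m-k)$ along the way.
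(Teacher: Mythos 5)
Your proposal is correct and follows essentially the same route as the paper: the paper likewise recasts \eqref{lemma1-subgradequation} as a flow-feasibility problem on the complete graph over $\{0,1,\dots,m\}$ with an auxiliary ground node carrying the $\lambda_1$ capacities (realized there as an explicit source--sink max-flow network with arc capacities $f_i^{-}$, $f_i^{+}$, $\lambda_1$, $\lambda_2$), applies max-flow min-cut, and uses the sortedness of $f$ to show that for each cardinality the binding cut is the contiguous prefix or suffix, yielding exactly \eqref{lowerbound} and \eqref{upperbound}. The only cosmetic differences are that you obtain necessity by directly summing the equations and cancelling the antisymmetric interior terms rather than via the cut bound, and that you invoke Hoffman/Gale feasibility as a black box where the paper constructs the source--sink reduction explicitly (its Lemma 1).
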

In Appendix \ref{appendix-proof1}, we prove this theorem by using symmetry of the regularization terms and
the idea in~\cite{hoefling2010path} that existence of subgradients in a fused Lasso problem
can be checked through a maximum flow problem. 


For a nonzero group $G_g$, let $m=p_g$ and $f_1 \ge f_2 \ge \cdots \ge f_m$ denote the sorted values of 
$x_i^\top (\Xg \betag - y) + \lambda_{1} \sign{\betag_g} + \lambda_2 r_g $ over $i \in G_g$. 
Then, the following corollary of Theorem \ref{theorem1}, which is derived by fixing $\lambda_1$ at zero, can be used to check optimality condition \eqref{clusteredlasso-subgradientequation}.
\begin{corollary} \label{corollary1}
	There exist $\tau_{ij} = -\tau_{ji} \in \dkakko{-\lambda_2, \lambda_2}$ such that
	\begin{equation} \label{coro1}
	f_i + \sum_{j \in \{1,\dots,m\} \setminus\ckakko{i}}\tau_{ij} =0, \qquad i=1,\dots,m,
	\end{equation}
	if and only if 
	\begin{equation} \label{coro2}
	\sum_{j=1}^{k} f_j \le \lambda_2 k(m-k), \quad  k=1,\dots,m-1,
	\end{equation}
        and $\sum_{j=1}^{m} f_j = 0$.
\end{corollary}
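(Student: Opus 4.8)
The plan is to derive the corollary as the special case $\lambda_1 = 0$ of Theorem~\ref{theorem1}, exactly as the surrounding text anticipates. First I would substitute $\lambda_1 = 0$ into the hypothesis of Theorem~\ref{theorem1}. Each subgradient $\tau_{i0}$ is then confined to the degenerate interval $[0,0]$, so every $\tau_{i0}$ must vanish and the subgradient equation \eqref{lemma1-subgradequation} collapses term by term into \eqref{coro1}. Consequently the existence assertion of the corollary is literally the existence assertion of Theorem~\ref{theorem1} at $\lambda_1 = 0$, and the whole task reduces to showing that the two inequality families \eqref{lowerbound} and \eqref{upperbound}, specialized to $\lambda_1=0$, are equivalent to \eqref{coro2} together with the equality $\sum_{j=1}^m f_j = 0$.

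Next I would isolate the endpoint indices, which are precisely the cases that generate the equality constraint. With $\lambda_1 = 0$, the lower bound \eqref{lowerbound} reads $\sum_{j=1}^{k} f_j \le \lambda_2 k(m-k)$ for $k = 1,\dots,m$, and its endpoint $k = m$ gives $\sum_{j=1}^{m} f_j \le 0$. Symmetrically, the upper bound \eqref{upperbound} reads $\sum_{j=k+1}^{m} f_j \ge -\lambda_2 k(m-k)$ for $k = 0,\dots,m-1$, and its endpoint $k = 0$ gives $\sum_{j=1}^{m} f_j \ge 0$. Combining these two boundary inequalities yields the equality $\sum_{j=1}^{m} f_j = 0$ appearing in the corollary.

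The final step is to show that, once $\sum_{j=1}^{m} f_j = 0$ is imposed, the interior upper-bound constraints are redundant, leaving only \eqref{coro2}. For each $k \in \{1,\dots,m-1\}$ the identity $\sum_{j=k+1}^{m} f_j = \sum_{j=1}^{m} f_j - \sum_{j=1}^{k} f_j = -\sum_{j=1}^{k} f_j$ rewrites the upper bound $\sum_{j=k+1}^{m} f_j \ge -\lambda_2 k(m-k)$ as exactly $\sum_{j=1}^{k} f_j \le \lambda_2 k(m-k)$, i.e. the corresponding lower bound. Thus the interior inequalities of \eqref{upperbound} merely duplicate the interior inequalities of \eqref{lowerbound}, which are precisely \eqref{coro2}. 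Assembling the three steps, the conditions of Theorem~\ref{theorem1} at $\lambda_1 = 0$ are equivalent to \eqref{coro2} plus $\sum_{j=1}^{m} f_j = 0$, establishing the corollary.

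I do not expect a substantive obstacle here, since the proof is a specialization followed by an elementary telescoping identity; the only point demanding care is the index bookkeeping, namely recognizing that it is exactly the two endpoint cases ($k = m$ in \eqref{lowerbound} and $k = 0$ in \eqref{upperbound}) that convert a pair of one-sided bounds into the single equality $\sum_{j=1}^{m} f_j = 0$, after which the remaining upper bounds collapse onto the lower bounds.
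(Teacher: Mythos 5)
Your proposal is correct and follows essentially the same route as the paper: specialize Theorem~\ref{theorem1} to $\lambda_1=0$, extract the equality $\sum_{j=1}^{m}f_j=0$ from the endpoint cases $k=m$ of \eqref{lowerbound} and $k=0$ of \eqref{upperbound}, and observe via the telescoping identity that the remaining inequalities of \eqref{upperbound} are implied by \eqref{coro2}. Your treatment of the endpoint bookkeeping is, if anything, slightly more explicit than the paper's.
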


\subsection{Events in path algorithm} 
In this subsection, we specify the events and their occurrence times in our path algorithm, an outline of which is presented in Section \ref{pathalgorithm}.
First, we define two types of events that change the set $\group$ of fused groups.
One is the {\it fusing event}, in which adjacent groups are fused when their coefficients collide.
The other is the {\it splitting event}, in which 
a group is split into smaller groups satisfying the optimality condition for each group when the condition is violated within the group.


Because the nonzero coefficients $\betaz(\eta)$ moves according to \eqref{clusterelasso-theta} until the groups change and $\betag_0(\eta)\equiv 0$,
two adjacent groups $G_g$ and $G_{g+1}$ have to be fused at time $\Delta^{\text{fuse}}_g(\eta)$ from $\eta$ given by
\begin{equation*}
\Delta^{\text{fuse}}_g(\eta) = \begin{cases}  \frac{-\betag_{g+1}(\eta) + \betag_{g}(\eta)}{\frac{\D \betag_{g+1}}{\D \eta} - \frac{\D \betag_{g}}{\D \eta}} & \text{if $\frac{\D \betag_{g+1}}{\D \eta} < \frac{\D \betag_{g}}{\D \eta}$}, \\
\displaystyle \infty & \text{otherwise},
\end{cases}
\end{equation*}
where $\frac{\D \betag_{g}}{\D \eta}$ is the slope of $\betag_g(\eta)$ 
with nonzero elements $\frac{\D \betaz}{\D \eta} = [(\Xz)^{\top} \Xz]^{-1}a$
and $\frac{\D \betag_{0}}{\D \eta}= 0$.

To specify the splitting events, let $o(j) \in \{1,\dots,p\}$ denote the order of coefficients
such that 
$G_g = \{ o(q_g+1),\dots,o(q_{g+1}) \} $ and $x_{o(q_g+1)}^\top (\Xg \betag - y) \ge \cdots \ge x_{o(q_{g+1})}^\top (\Xg \betag - y)$. 
From Corollary \ref{corollary1}, the optimality condition \eqref{clusteredlasso-subgradientequation} in a nonzero group $G_g$ holds if and only if
\begin{equation*} 
x_{\underline{o}(g,k)}^\top \kakko{\Xg \betag - y} + \lambda_{1} k \sign{\betag_g} + \lambda_2 k r_g \le \lambda_2 k(p_g-k), \qquad  k=1,\dots, p_g,
\end{equation*}
where $x_{\underline{o}(g,k)} = \sum_{j=q_g+1}^{q_g+k} x_{o(j)}$.
Hence, this condition fails with $k$, and $G_g$ has to be split into $\{o(q_g+1),\dots,o(q_g+k)\}$ and $\{o(q_g+k+1),\dots,o(q_{g+1})\}$
at time $\Delta_{g, k}^{\text{split}}$ from $\eta$ given by
\begin{equation*}
	\Delta_{g, k}^{\text{split}}= \begin{cases} \frac{ x_{\underline{o}(g,k)}^\top (y - \Xg \betag(\eta))+ \lambda_1 k \sign{\betag_g} + \lambda_2 k\dkakko{r_g-(p_g-k)}}
        {\sigma(g, k)} & \text{if $\sigma(g, k) > 0$}, \\
	\infty & \text{otherwise}, 
	\end{cases}
\end{equation*}
where $\sigma(g, k) = x_{\underline{o}(g,k)}^\top \Xg \frac{\D \betag}{\D \eta} + \bar{\lambda}_1 k \sign{\betag_g} + \bar{\lambda}_2 k\dkakko{r_g-(p_g-k)}$.

From Theorem \ref{theorem1}, the optimality condition \eqref{clusteredlasso-subgradientequation} in $G_0$ holds if and only if
\begin{align}
	x_{\underline{o}(0,k)}^\top \kakko{\Xg \betag - y} + \lambda_2 k r_0 &\le \lambda_1 k + \lambda_2k(p_0-k), \qquad  k=1,\dots, p_0, \label{clusteredlasso-zeroupper} \\
	x_{\overline{o}(0,k)}^\top \kakko{\Xg \betag - y} + \lambda_2 (p_0-k) r_0 &\ge -\lambda_1 (p_0-k) - \lambda_2 k(p_0-k), \quad  k=0,\dots, p_0-1, \label{clusteredlasso-zerolower}
\end{align}
where $x_{\overline{o}(0,k)} = \sum_{j=q_0+k+1}^{q_1} x_{o(j)}$.
Therefore, for $k = 1,\dots,p_0$, the condition \eqref{clusteredlasso-zeroupper} fails with $k$ and the coefficients $\beta_{o(q_0+1)},\dots,\beta_{o(q_0+k)}$ 
deviate from zero in the negative direction
at time $\Delta_{0,-k}^{\text{split}}$ from $\eta$ given by
\begin{equation*}
	\Delta_{0,-k}^{\text{split}}= \begin{cases} \frac{x_{\underline{o}(0,k)}^\top (y - \Xg \betag(\eta)) - \lambda_1 k  + \lambda_2 k \dkakko{r_0-(p_0-k)}}{\sigma(0, -k)} & \text{if $\sigma(0, -k) > 0$}, \\
	\infty & \text{otherwise}, 
	\end{cases}
\end{equation*}
where $\sigma(0, -k) = x_{\underline{o}(0,k)}^\top \Xg \frac{\D \betag}{\D \eta} - \bar{\lambda}_1 k  + \bar{\lambda}_2 k \dkakko{r_0-(p_0-k)}$.
Similarly, for $k = 0,\dots,p_0-1$, the condition \eqref{clusteredlasso-zerolower} fails with $k$ and the coefficients $\beta_{o(q_0+k+1)},\dots,\beta_{o(q_1)}$
deviate from zero in the positive direction
at time $\Delta_{0,k}^{\text{split}}$ from $\eta$ given by
\begin{equation*}
	\Delta_{0,k}^{\text{split}}= \begin{cases} \frac{x_{\overline{o}(0,k)}^\top (y - \Xg \betag(\eta))+ \lambda_1 (p_0-k) +  \lambda_2 (p_0-k)(r_0+k)}{\sigma(0, k)} & \text{if $\sigma(0, k) < 0$}, \\
	\infty & \text{otherwise}, 
	\end{cases}
\end{equation*}
where $\sigma(0, k) = x_{\overline{o}(0,k)}^\top \Xg \frac{\D \betag}{\D \eta} + \bar{\lambda}_1 (p_0-k) +  \bar{\lambda}_2 (p_0-k)(r_0+k)$.

In our path algorithm, it is necessary to define another internal event in which 
the order $o(1),\dots,o(p)$ of indices changes within a group; we call this event the {\it switching event}.
For $k \in \{1,\dots,p-1\}\setminus \{q_{\gmin +1},\dots,q_{\gmax}\}$,
the indices assigned to $o(k)$ and $o(k+1)$ are switched by reversal of the inequality $x_{o(k)}^\top (\Xg \betag - y) \ge x_{o(k+1)}^\top (\Xg \betag - y)$  
at time $\Delta_{k}^\text{switch}$ from $\eta$ given by
\begin{equation*}
\Delta_{k}^\text{switch} = \begin{cases} \frac{\kakko{x_{o(k)}^\top-x_{o(k+1)}^\top} \dkakko{y-\Xg\betag(\eta)}}{\kakko{x_{o(k)}^\top-x_{o(k+1)}^\top} \Xg \frac{\D \betag}{\D \eta}} & \text{if $(x_{o(k)}^\top-x_{o(k+1)}^\top) \Xg \frac{\D \betag}{\D \eta} < 0$}, \\
\infty & \text{otherwise}.
\end{cases}
\end{equation*}

\section{Path algorithm for OSCAR}
In this section, we propose a solution path algorithm for OSCAR \eqref{OSCAR}, which is derived in a manner similar to that for clustered Lasso.
We use the same symbols and variables as in the preceding section, which have similar but slightly different definitions.
Our algorithm constructs a solution path of $\betat(\eta)$ along regularization parameters 
$[\lambda_{1}, \lambda_{2}] = \eta[\bar{\lambda}_1, \bar{\lambda}_2]$.
We also assume $n\ge p$ and $\rank{X} = p$ to ensure strict convexity. 

In contrast to clustered Lasso, 
the regularization terms in \eqref{OSCAR} encourage the absolute values of the coefficients to be zero or equal. 
Hence, from the solution $\betat(\eta)$, 
we define the fused groups $\groupt(\eta) = \ckakko{G_0, G_1, \dots, G_{\gmax}}$ and
the {\it grouped absolute coefficients} $\betatg(\eta) = [\betatg_0, \betatg_1,\dots,\betatg_{\gmax}]^\top \in \mathbb{R}^{\gmax+1}$ 
to satisfy the following statements:
\begin{itemize}
                \item $\bigcup_{g=0}^{\gmax} G_g = \ckakko{1, \cdots, p}$, where $G_0$ may be an empty set but others may not.
                \item $\betatg_0=0< \betatg_1 < \cdots < \betatg_{\gmax}$ and $\abs{\betat_i} = \betatg_g $ for $i\in G_g$.
\end{itemize}
Correspondingly, 
we define the {\it signed grouped design matrix} 
$\Xtg = [\xtg_0, \xtg_1,\dots,\xtg_{\gmax}] \in \mathbb{R}^{n\times (\gmax+1)}$, 
where $\xtg_g = \sum_{j \in G_g} \sign{\beta_j} x_j$. 

\subsection{Piecewise linearity} 
Because the objective function of \eqref{OSCAR} is strictly convex, 
its solution path would be a continuous piecewise linear function as well as that of clustered Lasso.
As long as the grouping $\group$ of the solution $\beta(\eta)$ are conserved as defined above,
the problem \eqref{OSCAR} can be reduced to the following quadratic programming:
\begin{equation*} 
\minimize_{\beta}\ \ \frac{1}{2} \nkakko{y - \Xtg \betatg}^2 + \eta \bar{\lambda}_1 \sum_{g=1}^{\gmax} p_g \betatg_g + \eta \bar{\lambda}_2 \sum_{g=1}^{\gmax} p_g \kakko{\tilq_g + \frac{p_g - 1}{2}}  \betatg_g,
\end{equation*}
where $p_g$ is the cardinality of $G_g$ and $\tilq_g = p_0 + \cdots + p_{g-1}$.
Hence, because $\betag_0$ is fixed at zero, 
the nonzero elements of the absolute grouped coefficients $\betatz(\eta)= [\betatg_1,\dots,\betatg_{\gmax}]^\top$ are obtained by
\begin{equation} \label{OSCAR-theta}
\betatz(\eta) = \dkakko{\kakko{\Xtz}^\top \Xtz}^{-1} \dkakko{\eta \tila +\kakko{\Xtz}^\top  y}, 
\end{equation}
where $\Xtz = [\xtg_1,\dots,\xtg_{\gmax}] $ 
and $\tila = [\tila_1,\dots,\tila_{\gmax}]^\top \in \mathbb{R}^{\gmax}$, 
$\tila_g = -\bar{\lambda}_1 p_g - \bar{\lambda}_2 p_g (\tilq_g + \frac{p_g - 1}{2})$.

\subsection{Optimality condition} 
In this subsection, we show the optimality conditions of \eqref{OSCAR} and then derive a theorem to check them efficiently.
For a nonzero group $G_g \; (g\neq 0)$, the optimality condition can be described as follows:
\begin{equation} \label{OSCAR-subgradientequation}
s_i x_i^\top \kakko{\Xtg \betatg - y} + \lambda_{1} + \lambda_2 \tilq_g +  \sum_{j \in G_g \setminus\ckakko{i}} \kakko{\tau_{ij}+\frac{\lambda_2}{2}} = 0, \qquad i \in G_g,
\end{equation}
where $s_i = \sign{\beta_i}$ and $\tau_{ij}\in [-\lambda_2/2,\lambda_2/2]\; (i\neq j,\; i,j\in G_g) $ are subject to the constraints $\tau_{ij} + \tau_{ji} = 0$,
which imply that $\tau_{ij}+\frac{\lambda_2}{2}$ is a subgradient of 
$\lambda_2 \max \ckakko{\abs{\beta_i}, \abs{\beta_j}} = \frac{\lambda_2}{2}(\abs{s_i\beta_i - s_j\beta_j} + s_i\beta_i + s_j\beta_j)$ 
with respect to $s_i \beta_i$ when $s_i \beta_i = s_j \beta_j>0$.
Then, if we denote by $f_1 \ge f_2 \ge \cdots \ge f_m$ the sorted values of $s_i x_i^\top (\Xtg \betatg - y) + \lambda_1 + \lambda_2 (\tilq_g + \frac{p_g - 1}{2}) $ over $i \in G_g$, 
we can apply Corollary \ref{corollary1} to the condition \eqref{OSCAR-subgradientequation} to specify when it fails as in the next subsection.

For the group $G_0$ of zeros, the optimality condition is given by
\begin{equation} \label{OSCAR-subgradientequation0}
x_i^\top \kakko{\Xtg \betatg - y} + \xi_{i0} + \sum_{j \in G_0 \setminus\ckakko{i}} \xi_{ij} = 0, \qquad i \in G_0,
\end{equation}
where $\xi_{i0} \in [-\lambda_1,\lambda_1]$ is a subgradient of $\lambda_1 \zkakko{\beta_i}$ when $\beta_i=0$ 
and $\xi_{ij} \in [-\lambda_2,\lambda_2] \; (i\neq j,\; i,j\in G_0) $ are subject to the constraints $\abs{\xi_{ij}} + \abs{\xi_{ji}} \le \lambda_2$,
which implies that $\xi_{ij}$ is a subgradient of the $L_{\infty}$ penalty $\lambda_2 \max \ckakko{\abs{\beta_i}, \abs{\beta_j}}$ with respect to $\beta_i$ when $\beta_i = \beta_j = 0$.
When we assume $m=p_0\ge 1$ and denote by $f_1,\dots,f_m$ the values of $x_i^\top (\Xtg \betatg - y )$ in $i \in G_0$ sorted as $\abs{f_1}\le \abs{f_2}\le \cdots \le \abs{f_m}$, 
we can propose the following theorem to check condition \eqref{OSCAR-subgradientequation0}.
\begin{theorem} \label{theorem2}
	There exist $\xi_{i0} \in \dkakko{-\lambda_1, \lambda_1}$ and $\xi_{ij} \in \dkakko{-\lambda_2, \lambda_2}$ such that $\abs{\xi_{ij}} + \abs{\xi_{ji}} \le \lambda_2$ and 
	\begin{equation} \label{lemma2-subgradequation}
	f_i + \xi_{i0} + \sum_{j \in \{1,\dots,m\} \setminus\ckakko{i}}\xi_{ij} =0, \qquad i=1,\dots,m, 
	\end{equation}
	if and only if
	\begin{equation} \label{lemma2-condition}
	\sum_{j=k+1}^{m} \abs{f_j} \le \lambda_1 (m-k) + \lambda_2 \frac{(m-k)(m+k-1)}{2}, \qquad  k=0,\dots, m-1.
	\end{equation}
\end{theorem}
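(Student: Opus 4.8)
The plan is to reuse the maximum-flow viewpoint behind Theorem~\ref{theorem1}, but with a network tailored to the coupling $\zkakko{\xi_{ij}}+\zkakko{\xi_{ji}}\le\lambda_2$ coming from the $L_\infty$ penalty, which here replaces the antisymmetry $\tau_{ij}=-\tau_{ji}$ of the clustered-Lasso case. A first simplification I would make is to reduce to $f_i\ge 0$. Indeed, if $f_i<0$ for some $i$, then replacing $f_i$, $\xi_{i0}$, and $\xi_{ij}$ (for all $j\neq i$) by their negatives leaves the $i$-th equation in \eqref{lemma2-subgradequation} satisfied and touches no other equation, because the variable $\xi_{ij}$ appears only in equation $i$ (equation $\ell$ contains $\xi_{\ell i}$, a different variable). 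This flip preserves $\xi_{i0}\in\dkakko{-\lambda_1,\lambda_1}$ and $\zkakko{\xi_{ij}}+\zkakko{\xi_{ji}}\le\lambda_2$, and since the right-hand side of \eqref{lemma2-condition} depends on the $f_j$ only through $\zkakko{f_j}$, neither the feasibility of \eqref{lemma2-subgradequation} nor the target inequalities change. Hence I may assume $0\le f_1\le\cdots\le f_m$.

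For the sufficiency direction I would encode feasibility of \eqref{lemma2-subgradequation} as a single-commodity flow. Writing $d_{i0}=-\xi_{i0}$ and $d_{ij}=-\xi_{ij}$, it suffices to find nonnegative discharges with $d_{i0}\in\dkakko{0,\lambda_1}$, $d_{ij}\ge 0$ and $d_{ij}+d_{ji}\le\lambda_2$ solving $d_{i0}+\sum_{j\neq i}d_{ij}=f_i$, since then $\xi_{i0}=-d_{i0}\in\dkakko{-\lambda_1,0}$ and $\xi_{ij}=-d_{ij}$ satisfy every constraint of \eqref{lemma2-subgradequation} (note $\zkakko{\xi_{ij}}+\zkakko{\xi_{ji}}=d_{ij}+d_{ji}$). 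I would realize these discharges as a flow in a network with a source $\sigma$, a sink $\delta$, a node $v_i$ per coordinate, and a \emph{pair node} $w_{ij}$ ($=w_{ji}$) per unordered pair, with arcs $\sigma\to v_i$ of capacity $\lambda_1$, arcs $\sigma\to w_{ij}$ of capacity $\lambda_2$, infinite-capacity arcs $w_{ij}\to v_i$ and $w_{ij}\to v_j$, and arcs $v_i\to\delta$ of capacity $f_i$. A flow saturating every sink arc has value $\sum_i f_i$ and, by conservation at $v_i$, yields exactly such discharges; so for sufficiency it is enough to show the maximum flow equals $\sum_i f_i$.

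It then remains to evaluate the minimum cut. For a cut with demand nodes $S=\ckakko{i:v_i\text{ on the sink side}}$, the infinite-capacity arcs force every pair node meeting $S$ to the sink side, so the cut capacity is $\lambda_1\zkakko{S}+\sum_{i\notin S}f_i+\lambda_2\kakko{\binom{m}{2}-\binom{m-\zkakko{S}}{2}}$. By max-flow--min-cut, the maximum flow equals $\sum_i f_i$ exactly when $\sum_{i\in S}f_i\le\lambda_1\zkakko{S}+\lambda_2\kakko{\binom{m}{2}-\binom{m-\zkakko{S}}{2}}$ for every $S$. Crucially the right-hand side depends on $S$ only through $\zkakko{S}$, so among subsets of a fixed size the binding one consists of the largest $f_i$; since the $f_i$ are sorted, taking $\zkakko{S}=m-k$ and $S=\ckakko{k+1,\dots,m}$ maximizes the left-hand side, and after simplifying $\binom{m}{2}-\binom{k}{2}=\frac{(m-k)(m+k-1)}{2}$ this reproduces exactly \eqref{lemma2-condition} for $k=0,\dots,m-1$. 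The necessity direction I would argue directly: summing \eqref{lemma2-subgradequation} over $i\in\ckakko{k+1,\dots,m}$ and bounding the $m-k$ terms $-\xi_{i0}$ by $\lambda_1$, the $\binom{m-k}{2}$ within-set terms $-(\xi_{ij}+\xi_{ji})$ by $\lambda_2$, and the $(m-k)k$ cross terms $-\xi_{ij}$ by $\lambda_2$ gives precisely the same inequality.

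The step I expect to require the most care is the network model of the coupling $\zkakko{\xi_{ij}}+\zkakko{\xi_{ji}}\le\lambda_2$: unlike the antisymmetric flows behind Theorem~\ref{theorem1}, each pair here carries a shared budget $\lambda_2$ split between its two endpoints, which is exactly why I introduce pair nodes and why only the one-sided family \eqref{lemma2-condition} appears rather than the two-sided pair of bounds in Theorem~\ref{theorem1}. Checking that the infinite-capacity arcs make the minimum cut range only over the demand sets $S$ above, and that nonnegative discharges always suffice (so no $\xi$ with $\zkakko{\xi_{i0}}>\lambda_1$ or with mixed signs is needed), are the points where I would be most careful to keep the equivalence with \eqref{lemma2-subgradequation} exact.
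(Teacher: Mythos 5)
Your proof is correct and follows essentially the same route as the paper: necessity by summing the subgradient equations over $\{k+1,\dots,m\}$ and bounding $\abs{\xi_{i0}}$, the within-set sums $\abs{\xi_{ij}}+\abs{\xi_{ji}}$, and the cross terms; sufficiency by max-flow/min-cut on a network with one auxiliary node per unordered pair carrying the shared budget $\lambda_2$. Your network is just the paper's with source and sink roles reversed (and infinite internal arcs in place of $\lambda_2$ ones), and your upfront sign normalization replaces the paper's use of $\abs{f_i}$ capacities with a sign adjustment $\xi_{ij}=s_i\tau_{ij}$ at the end, so the two arguments are equivalent.
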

The proof of this theorem is provided in Appendix \ref{appendix-proof2}.

\subsection{Events in path algorithm} 
In our path algorithm for OSCAR, fusing, splitting and switching events are defined similarly to those for clustered Lasso.
From \eqref{OSCAR-theta} and $\betatg_0(\eta)\equiv 0$,
the time $\Delta^{\text{fuse}}_g(\eta)$ from $\eta$ to the fusing event in which two adjacent groups $G_g$ and $G_{g+1}$ have to be fused is given by
\begin{equation*}
\Delta^{\text{fuse}}_g(\eta) = \begin{cases}  \frac{-\betag_{g+1}(\eta) + \betag_{g}(\eta)}{\frac{\D \betag_{g+1}}{\D \eta} - \frac{\D \betag_{g}}{\D \eta}} & \text{if $\frac{\D \betag_{g+1}}{\D \eta} < \frac{\D \betag_{g}}{\D \eta}$}, \\
\displaystyle \infty & \text{otherwise},
\end{cases}
\end{equation*}
where $\frac{\D \betatg_{g}}{\D \eta}$ is the slope of $\betatg_g(\eta)$ 
with nonzero elements $\frac{\D \betatz}{\D \eta} = [(\Xz)^{\top} \Xz]^{-1}\tila$
and $\frac{\D \betatg_{0}}{\D \eta}= 0$.



To specify the splitting events, let $o(j) \in \{1,\dots,p\}$ denote the order of coefficients
such that 
$G_g = \{ o(q_g+1),\dots,o(q_{g+1}) \} $ and $s_{o(q_g+1)} x_{o(q_g+1)}^\top (\Xg \betag - y) \ge \cdots \ge s_{o(q_{g+1})} x_{o(q_{g+1})}^\top (\Xg \betag - y)$ 
for each group, where $s_i$ is defined by
\begin{equation*} 
s_i = \begin{cases}  \sign{\beta_i} & \text{if $\beta_i \neq 0$}, \\ - \sign{x_i^\top (\Xtg \betatg - y)} & \text{if $\beta_i = 0$}. \end{cases}
\end{equation*} 
Note that, when $\beta_i$ hits or leaves zero, the value of $s_i$ does not change while its definition changes.
Then, from Corollary \ref{corollary1}, optimality condition \eqref{OSCAR-subgradientequation} in a nonzero group $G_g$ holds if and only if
\begin{equation*} 
x_{\underline{o}(g,k)}^\top \kakko{\Xg \betag - y} + \lambda_{1} k + \lambda_2 k \kakko{q_g + \frac{p_g - 1}{2}} \le \frac{\lambda_2}{2} k(p_g-k), \qquad  k=1,\dots, p_g,
\end{equation*}
where $x_{\underline{o}(g,k)} = \sum_{j=q_g+1}^{q_g+k} s_{o(j)} x_{o(j)}$.
Hence, 
$G_g$ is split into $\{o(q_g+1),\dots,o(q_g+k)\}$ and $\{o(q_g+k+1),\dots,o(q_{g+1})\}$
when this condition fails with $k$ at time $\Delta_{g, k}^{\text{split}}$ from $\eta$ given by
\begin{equation*}
	\Delta_{g, k}^{\text{split}}= \begin{cases} \frac{ x_{\underline{o}(g,k)}^\top (y - \Xg \betag(\eta)) + \lambda_1 k + \lambda_2 k (q_g + \frac{k-1}{2})}{\sigma(g, k)} & \text{if $\sigma(g, k) > 0$}, \\
	\infty & \text{otherwise}, 
	\end{cases}
\end{equation*}
and $\sigma(g, k) = x_{\underline{o}(g,k)}^\top \Xg \frac{\D \betag}{\D \eta} + \bar{\lambda}_1 k + \bar{\lambda}_2 k (q_g + \frac{k-1}{2})$.

For $G_0$, from Theorem \ref{theorem2}, the optimality condition \eqref{OSCAR-subgradientequation0} holds if and only if
\begin{equation*} 
x_{\overline{o}(0,k)}^\top \kakko{\Xg \betag - y} \ge - \lambda_1(p_0-k) - \lambda_2\frac{(p_0-k)(p_0 + k - 1)}{2}, \qquad  k=1,\dots, p_0,
\end{equation*}
where $x_{\overline{o}(0,k)} = \sum_{j=k+1}^{p_0} s_{o(0,j)}x_{o(0,j)}$.
Hence, the coefficients $\beta_{o(k+1)},\dots,\beta_{o(p_0)}$ deviate from zero 
when this condition fails with $k$ at time $\Delta_{0,k}^{\text{split}}$ from $\eta$ given by
\begin{equation*}
	\Delta_{0,k}^{\text{split}}= \begin{cases} \frac{x_{\overline{o}(0,k)}^\top (y - \Xg \betag(\eta))+ \lambda_1 (p_0-k)  + \lambda_2 \frac{(p_0-k)(p_0 + k - 1)}{2}}{\sigma(0, k)} & \text{if $\sigma(0, k) < 0$}, \\
	\infty & \text{otherwise}, 
	\end{cases}
\end{equation*}
and $\sigma(0, k) = x_{\overline{o}(0,k)}^\top \Xg \frac{\D \betag}{\D \eta} + \bar{\lambda}_1 (p_0-k)  + \bar{\lambda}_2 \frac{(p_0-k)(p_0 + k - 1)}{2}$.

The switching event of the order $o(1),\dots,o(p)$ is needed for OSCAR as well.
For $k \in \{1,\dots,p-1\}\setminus \{q_1,\dots,q_{\gmax}\}$,
the indices assigned to $o(k)$ and $o(k+1)$ are switched by reversal of the inequality $s_{o(k)} x_{o(k)}^\top (\Xtg \betatg - y) \ge s_{o(k+1)} x_{o(k+1)}^\top (\Xtg \betatg - y)$ 
at time $\Delta_{k}^\text{switch}$ from $\eta$ given by
\begin{equation*} 
\Delta_{k}^\text{switch} = \begin{cases} \frac{(s_{o(k)} x_{o(k)}^\top-s_{o(k+1)} x_{o(k+1)}^\top) [y-\Xtg\betatg(\eta)]}{(s_{o(k)} x_{o(k)}^\top-s_{o(k+1)} x_{o(k+1)}^\top) \Xtg \frac{\D \betatg}{\D \eta}} & \text{if $(s_{o(k)} x_{o(k)}^\top-s_{o(k+1)} x_{o(k+1)}^\top) \Xg \frac{\D \betag}{\D \eta} < 0$}, \\
\infty & \text{otherwise}.
\end{cases}
\end{equation*}
Additionally, since $0\ge s_{o(1)} x_{o(1)}^\top (\Xtg \betatg - y) \ge \cdots \ge s_{o(p_0)} x_{o(p_0)}^\top (\Xtg \betatg - y)$, 
we need to add another case to the switching event in OSCAR, in which the sign $s_{o(1)} = - \sign{x_{o(1)}^\top (\Xtg \betatg - y)}$ reverses
at time $\Delta_{0}^\text{switch}$ from $\eta$ given by
\begin{equation*} 
\Delta_{0}^\text{switch} = \begin{cases} \frac{x_{o(1)}^\top \dkakko{y-\Xtg\betatg(\eta)}}{x_{o(1)}^\top \Xtg \frac{\D \betatg}{\D \eta}} & \text{if $s_{o(1)} x_{o(1)}^\top \Xtg \frac{\D \betatg}{\D \eta} > 0$}, \\
\infty & \text{otherwise}.
\end{cases}
\end{equation*}

\section{Path algorithm and complexity} \label{pathalgorithm}
The outline of our path algorithms for clustered Lasso and OSCAR are shown in Algorithm~\ref{oscar-algorithm}.
Though the variables are defined differently for clustered Lasso and OSCAR,
both algorithms have the same types of events and thus can be described in a common format.

As for the computational cost, $\mathcal{O}(np^2)$ time is required to obtain the initial solution $\beta^{(0)} = (X^\top X)^{-1}X^\top y$.
By using a block matrix computation to update $[(\Xz)^\top \Xz]^{-1}$, 
each iteration where a fusing/splitting event occurs requires $\mathcal{O}(np)$ time.
The complexity of each iteration where a switching event occurs is even smaller and only $\mathcal{O}(n)$
because we only need to update the event times for the indices switched by the event.
For more detail, see Appendix \ref{appendix-blockmatrix}.
Thus, Algorithm~\ref{oscar-algorithm} requires $\mathcal{O}(np^2 + (T_\text{fuse}+ T_\text{split})np + T_\text{switch} n)$ time where
$T_\text{fuse}$, $T_\text{split}$ and $T_\text{switch}$ are the numbers of fusing, splitting and switching events which occur until the algorithm ends, respectively.




\begin{algorithm} 
	\caption{Path algorithm for clustered Lasso and OSCAR} \label{oscar-algorithm}
	\begin{algorithmic}[1]
		\State $t \leftarrow 0$, $\eta^{(0)} \leftarrow 0$, $\beta^{(0)} \leftarrow (X^\top X)^{-1}X^\top y$
		\State Compute $\group$, $\betag$, $\Xg$, $o(\cdot)$, $[(\Xz)^\top \Xz]^{-1}$, $\frac{\D \betag}{\D \eta}$, $\frac{\D \beta}{\D \eta}$ and, for OSCAR only, $s_1,\dots,s_p$.
		\While{$\frac{\D \beta}{\D \eta}\neq $\textrm{\boldmath $0$}$_p$} 
		\State Compute the times $\Delta_{g}^\text{fuse}, \Delta_{g, k}^\text{split}$ and $\Delta_{k}^\text{switch}$ of fusing, splitting and switching events.
		\If{$\Delta_{g}^\text{fuse}$ is minimum}
		\State $\eta^{(t+1)} \leftarrow \eta^{(t)} + \Delta_{g}^\text{fuse}$, $\beta^{(t+1)} \leftarrow \beta^{(t)} + \Delta_{g}^\text{fuse} \frac{\D \beta}{\D \eta}$.
		\State Fuse $G_g$ and $G_{g+1}$, and update $\group$, $\betag$, $\Xg$, $o(\cdot)$, $[(\Xz)^\top \Xz]^{-1}$, $\frac{\D \betag}{\D \eta}$ and $\frac{\D \beta}{\D \eta}$. 
		\ElsIf{$\Delta_{k}^\text{switch}$ is minimum}
		\State $\eta^{(t+1)} \leftarrow \eta^{(t)} + \Delta_{k}^\text{switch}$, $\beta^{(t+1)} \leftarrow \beta^{(t)} + \Delta_{k}^\text{switch} \frac{\D \beta}{\D \eta}$.
		\If{$k=0$ (for OSCAR only)}
		\State Switch the sign of $s_{o(1)}$. 
                \Else 
                \State Switch the indices assigned to $o(k)$ and $o(k+1)$.  
                \EndIf
		\Else 
		\State $\eta^{(t+1)} \leftarrow \eta^{(t)} + \Delta_{g, k}^\text{split}$, $\beta^{(t+1)} \leftarrow \beta^{(t)} + \Delta_{g, k}^\text{split} \frac{\D \beta}{\D \eta}$.
		\State Split $G_g$, and update $\group$, $\betag$, $\Xg$, $o(\cdot)$, $[(\Xz)^\top \Xz]^{-1}$, $\frac{\D \betag}{\D \eta}$ and $\frac{\D \beta}{\D \eta}$. 
		\EndIf
		\State $t\leftarrow t+1$
		\EndWhile
	\end{algorithmic}
\end{algorithm}

\section{Numerical Experiment}
In this section, we evaluate the processing time and accuracy of our path algorithms through synthetic data and real data.
All the experiments are conducted on a Windows 10 64-bit machine with Intel i7-8665U CPU at 1.90GHz and 16GB of RAM.

First, we compare the processing time and the number of iterations on synthetic datasets 
between our algorithms implemented in R and the dual path algorithm (DPA)~\cite{tibshirani2011,arnold2016efficient} 
\footnote{We use the DPA in R package `genlasso'\url{https://cran.r-project.org/web/packages/genlasso/}.}. 
The synthetic datasets are generated from the model $y=X\beta + e$ where $e \sim N(0, I_n)$.
The covariates in $X$ are generated as independent and identical standard normal variables.
The true coefficients are given by $\beta = [\theta^\top,\theta^\top,-\theta^\top,-\theta^\top,$\textrm{\boldmath $0$}$_{0.2p}^\top]^\top \in \mathbb{R}^p$ where $\theta \sim N(0, I_{0.2p})$. 
We set four levels of the problem size $[n,p]\in \{[20, 10], [60, 30], [100, 50], [200, 100]\}$
and two directions of tuning parameters $[\bar{\lambda}_{1}, \bar{\lambda}_{2}]\in \{[0, 1], [1, 1]\}$ in the path algorithms for clustered Lasso and OSCAR, respectively. 
Figure~\ref{fig_runtime} shows the average running time over 10 simulated datasets for each case. 
As the problem size gets larger,
our algorithms become much faster than the dual path algorithm~\cite{tibshirani2011,arnold2016efficient}. 
When we count the number of iterations as in Figure~\ref{fig_iteration}, it increases rapidly with the problem size.
The number of iterations in our method is approximately doubled by the switching events, 
but still less than that in the DPA which includes events occurred only in the dual problem. 

\begin{figure}[htbp]
 \begin{minipage}{0.5\hsize}
  \begin{center}
	\includegraphics[width=34mm]{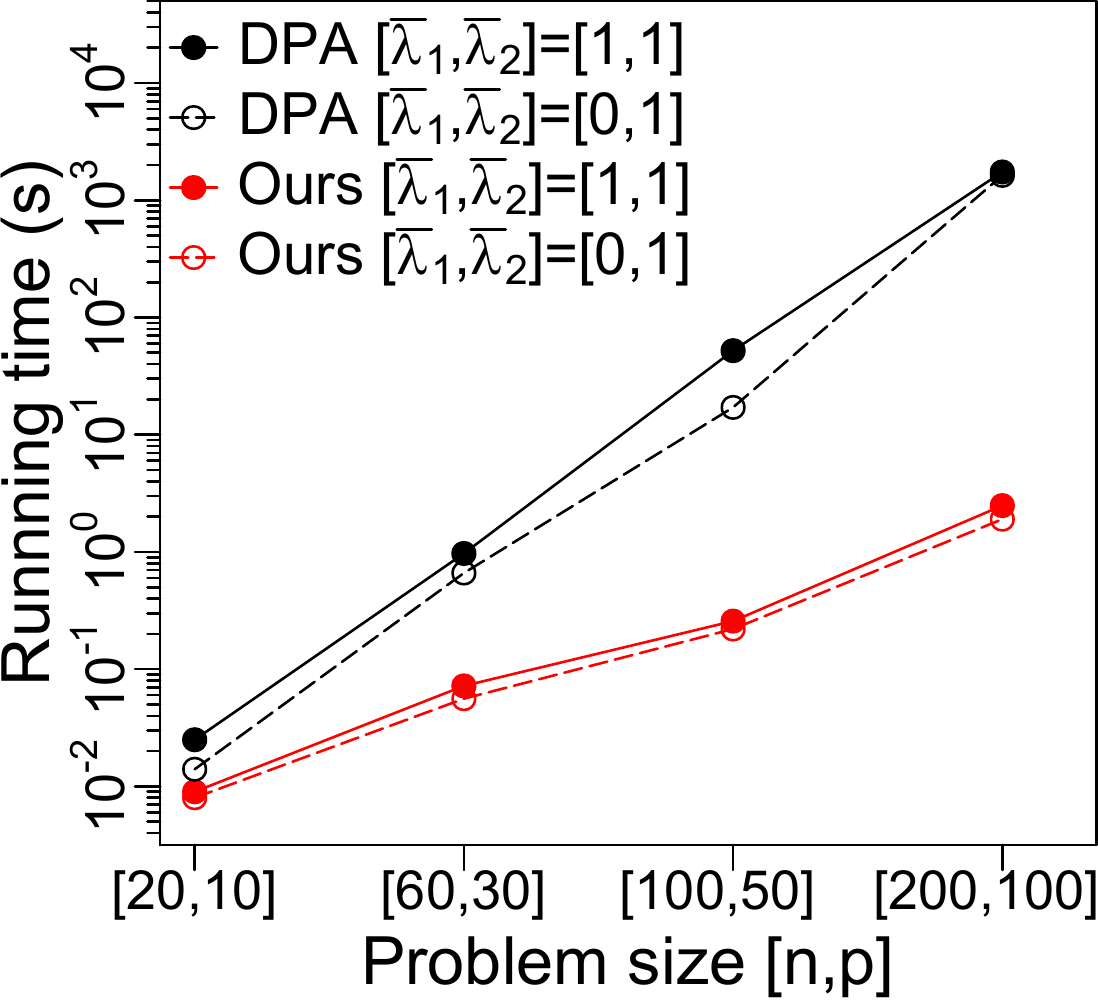}
	\includegraphics[width=34mm]{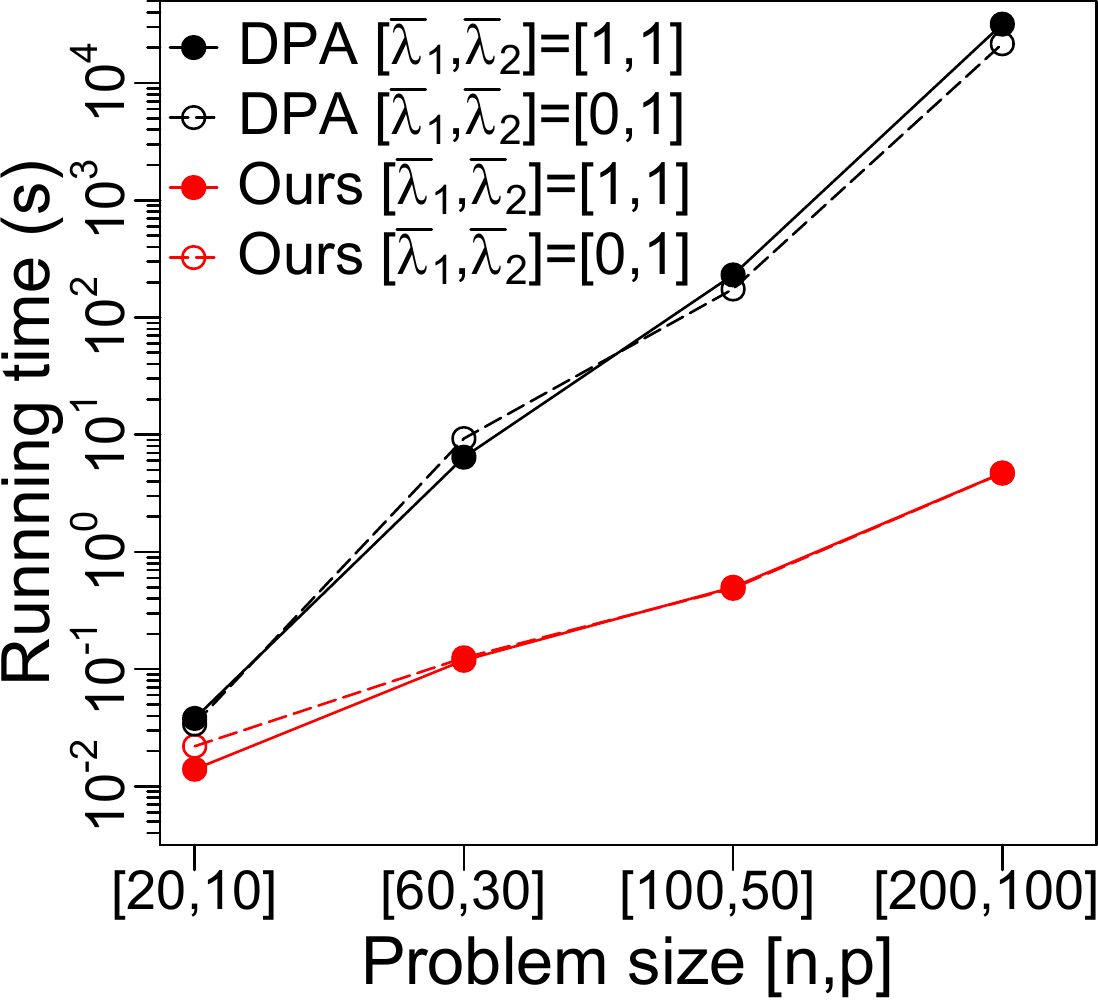}\\
  \end{center}
\hspace{0.3cm} (a) Clustered Lasso \hspace{1cm}(b) OSCAR 
  \caption{Running time on synthetic data}
  \label{fig_runtime}
 \end{minipage}
 \begin{minipage}{0.5\hsize}
  \begin{center}
	\includegraphics[width=34mm]{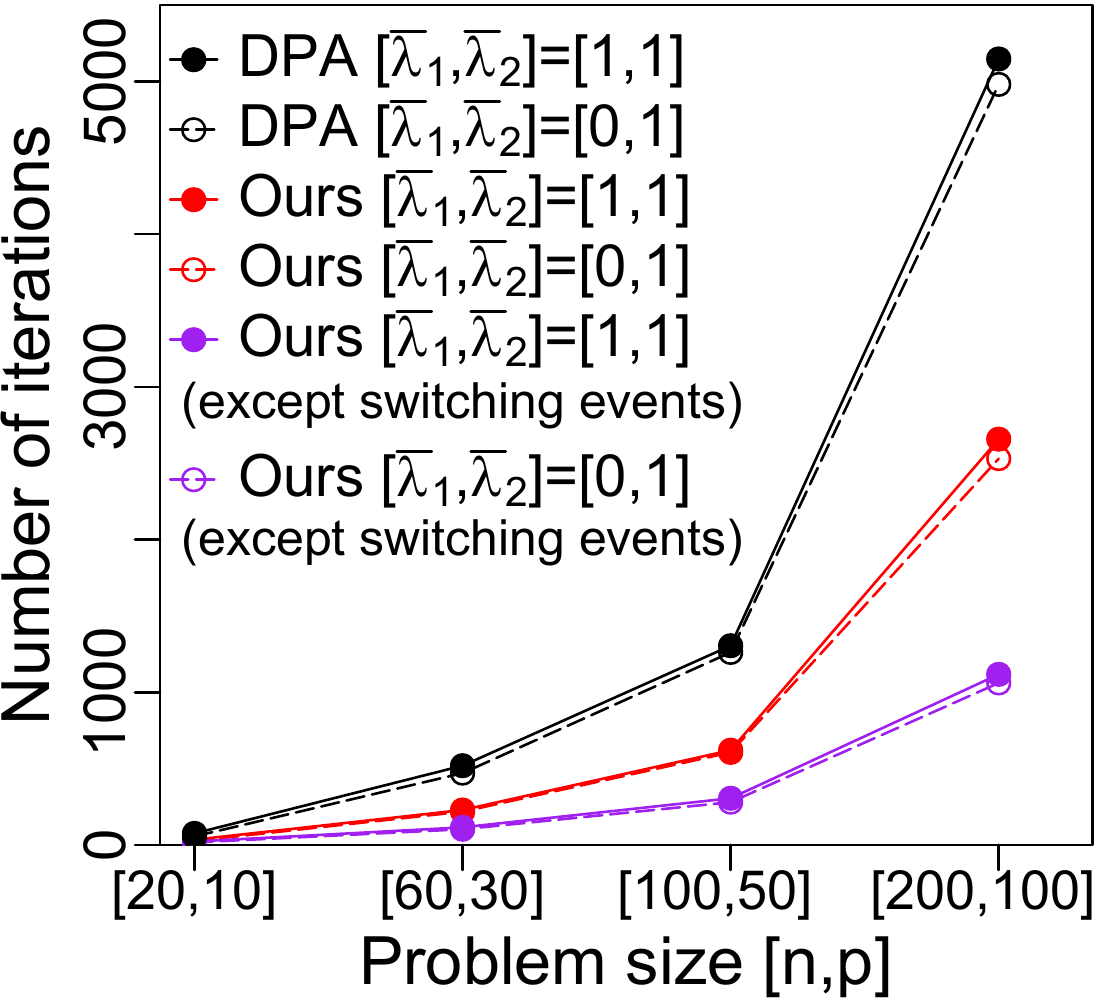}
	\includegraphics[width=34mm]{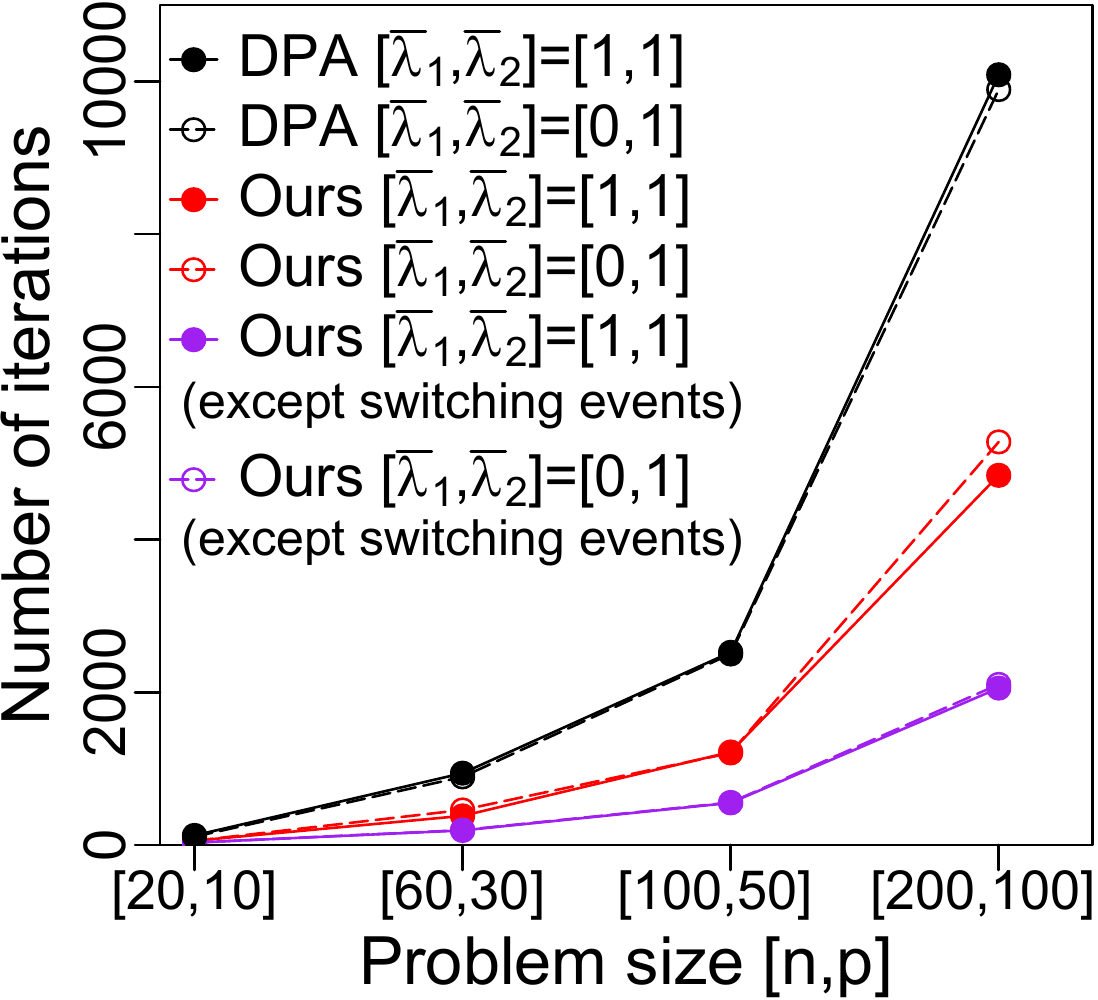}\\
  \end{center}
 \hspace{0.3cm}(a) Clustered Lasso \hspace{1cm}(b) OSCAR 
  \caption{Number of iterations on synthetic data}
  \label{fig_iteration}
 \end{minipage}
\end{figure}


We also conduct experiments on real datasets; {\it splice} dataset from the LIBSVM data~\cite{LIBSVM2011}, 
{\it optdigits} dataset from the UCI data~\cite{Dua:2019}
and {\it brvehins2} dataset\footnote{This dataset is available in R package `CASdatasets' from \url{http://cas.uqam.ca/}.} of automobile insurance claims in Brazil. 
Each dataset includes training and test data whose sizes are shown in Table \ref{table}.
For brvehins2 data, we calculate the mean amount of robbery claims in each policy as a response variable,
set 341 dummy variables for the cities where 10+ robbery claims occurred as predictors
and divide their records into training and test data evenly as $n_{\mbox{train}} = n_{\mbox{test}}$.  
For each dataset, we run 5-fold cross validation (CV) to tune $\eta$ 
and select $\bar{\lambda}_{1}$ from $\{0,0.5,1,2 \}$ while $\bar{\lambda}_{2}$ is fixed at 1.
We compare two tuning methods for $\eta$; One is a path-based search from all the event times in 5 entire solution paths of CV trials
and the other is a grid search from 100 grid points $\eta = 10^{-\frac{4i}{99}}\eta_{\max}$ $(i=0,\dots,99)$
where $\eta_{\max}$ is the terminal point of the solution path.
The solution paths are obtained by our methods implemented in Matlab 
and the solution for each grid point is given by the accelerated proximal gradient (APG) algorithms
for clustered Lasso~\cite{lin2019efficient} and OSCAR~\cite{bogdan2015}
\footnote{Its Matlab code is available at \url{http://statweb.stanford.edu/~candes/software/SortedL1/}.}.
In Table~\ref{table}, we evaluate the CV errors and test errors by the mean squared error (MSE).
Our path-based tuning of $\eta$ performs slightly better than or equally to the grid search in CV errors. 
The test errors are also slightly different between them.
The number of nonzero groups (gnnz) in $\group(\eta)$ with $\eta$ selected by the path and grid search is also shown in Table \ref{table}.
The gnnz is sensitive to the value of $\eta$ and also differs between the path and grid search.
In Table~\ref{timeperiter}, we compare the running time per iteration with respect to event types and that per grid point in the grid search with the APG algorithms.
Our path algorithms can update solution by path events much faster than the APG algorithms.
As described in the previous section, a switching event takes much shorter time than a fusing/splitting event.

\begin{table}[htbp]
  \caption{Cross validation errors, test errors and number of nonzero groups (gnnz).}
  \label{table}
  \centering
  \begin{tabular}{ccrrrrrr}
    \toprule
    Dataset and size & Search & \multicolumn{3}{c}{Clustered Lasso}  & \multicolumn{3}{c}{OSCAR} \\
    \cmidrule(r){3-8}
    $[n_{\mbox{train}},n_{\mbox{test}},p]$ & points & CV MSE & test MSE & gnnz & CV MSE & test MSE & gnnz \\
    \midrule
    splice           & Path & 0.5734 & 0.4802 & 40 & 0.5725 & 0.4823 & 39 \\
    $[1000,2175,60]$ & Grid & 0.5734 & 0.4800 & 41 & 0.5726 & 0.4826 & 37 \\
    \midrule
    optdigits        & Path & 3.8267 & 3.7783 & 45 & 3.8265 & 3.7775 & 46 \\
    $[3823,1797,61]$ & Grid & 3.8267 & 3.7781 & 44 & 3.8265 & 3.7779 & 46 \\
    \midrule
    brvehins2           & Path & 1.5473 & 1.5124 & 153 & 1.5474 & 1.5122 & 151 \\
    $[15832,15832,341]$ & Grid & 1.5474 & 1.5125 & 158 & 1.5474 & 1.5121 & 145 \\
    \bottomrule
  \end{tabular}
\end{table}

\begin{table}[htbp]
  \caption{Running time~(s) per iteration of each event type in our algorithm and per grid point in APG.}
  \label{timeperiter}
  \centering
  \begin{tabular}{ccccrcccr}
    \toprule
    & \multicolumn{4}{c}{Clustered Lasso}  & \multicolumn{4}{c}{OSCAR} \\
    \cmidrule(r){2-9}
    Dataset & fuse & split & switch & \multicolumn{1}{c}{APG} & fuse & split & switch & \multicolumn{1}{c}{APG} \\
    \midrule
    splice     & 0.0006 & 0.0007 & 0.0001 & 0.0051 & 0.0006 & 0.0006 & 0.0001 & 0.0054 \\
    optdigits  & 0.0030 & 0.0031 & 0.0003 & 0.2503 & 0.0039 & 0.0040 & 0.0003 & 0.1787 \\
    brvehins2  & 0.0739 & 0.0725 & 0.0062 & 5.1822 & 0.0892 & 0.0873 & 0.0047 & 5.4041 \\
    \bottomrule
  \end{tabular}
\end{table}

\section{Conclusion}
We proposed efficient path algorithms for clustered Lasso and OSCAR.
For both problems, there are only two types of events that make change-points in solution paths, named fusing and splitting events.
By using symmetry of regularization terms, we derived simple conditions to monitor violation of optimal conditions which causes a split.
Especially, we showed that a group can be split only along a certain order of indices determined by the first derivative of the square loss. 
Our approach may be extended to other sparse regularization such as SLOPE~\cite{bogdan2015}, whose penalty terms have a similar symmetric structure.
Numerical experiments showed that our algorithms are much faster than the existing methods.
Though our algorithms require enormous iterations for large problems to obtain entire solution paths,
they can be modified to make a partial solution path within an arbitrary interval of $\eta$, which may be determined by a coarse grid search
with fast solvers~\cite{zhong2012efficient, lin2019efficient, luo2019efficient}.

\section{Broader Impact}
Clustered Lasso and OSCAR are feature clustering and selection methods that can be used as powerful tools for dimensionality reduction of feature space. 
Our path algorithms can provide fine tuning of regularization parameters for clustered Lasso and OSCAR in reasonable time.
We illustrated some applications of our algorithms to DNA microarray analysis, image analysis and actuarial science in this paper.
However, we should avoid abusing such methods to irrelevant features, which may cause misinterpretation of grouped features.

\appendix
\section{Proofs of Theorem 1 and Corollary 1} \label{appendix-proof1}
This section provide proofs of Theorem \ref{theorem1} and Corollary \ref{corollary1}.
To prove that theorem, we prove a lemma extended from Theorem 2 in \cite{hoefling2010path}.
Theorem 2 in \cite{hoefling2010path} states that subgradient equations in a fused Lasso signal approximator (i.e. $X$ is an identical matrix)
without Lasso terms (i.e. $\lambda_1=0$) can be checked through a maximum flow problem on the underlying graph whose edges correspond to the pairwise fused Lasso penalties. 
Here we extend a part of the statements in the theorem into the weighted fused Lasso problem with weighted Lasso terms by introducing the {\it flow network} $G=\{(V,E),c,r,s\}$ defined as follows:
\begin{description}
	\item[Vertices:] 
        Define the vertices by $V = \{0,1,\dots,m\} \cup \{r,s\}$ where $r$ and $s$ are called the {\it source} and the {\it sink}, respectively.        
	\item[Edges:] 
        Define the edges by $E = \ckakko{(i,j); i,j\in V, i\neq j } \setminus \ckakko{(r,s),(s,r)}$,
        that is, all the pairs of vertices except for the pair of the source and sink are linked to each other.
	\item[Capacities:] Define the capacities on the edges in $E$ by
	\begin{align*}
	c(r,i) &= f_i^{-},\; c(i,r) = 0, & i = 0,\dots,m, \\
	c(i,s) &= f_i^{+},\; c(s,i) = 0, & i = 0,\dots,m, \\
	c(i,0) &= c(0,i) = \lambda_{i0}, & i = 1,\dots,m, \\
	c(i,j) &= c(j,i) = \lambda_{ij}, & 1 \le i < j \le m, 
	\end{align*}
        where $f_0 = -\sum_{j=1}^m f_j$, $f_i^{-} = \max\ckakko{-f_i,0}$ and $f_i^{+} = \max\ckakko{f_i,0}$.
        Note that we have $f_i = f_i^{+}-f_i^{-}$ and hence $\sum_{j=0}^m f_j^{+} - \sum_{j=0}^m f_j^{-} = \sum_{j=0}^m f_j = 0$.
\end{description}
A {\it flow} on the flow network $G=\{(V,E),c,r,s\}$ is a set of values $\tau = \{ \tau_{ij};(i,j)\in E \}$ satisfying the following three properties:
\begin{description}
	\item[Skew symmetry:] $\tau_{ij} = -\tau_{ji}$ for $(i,j) \in E$,
	\item[Capacity constraint:] $\tau_{ij} \le c(i,j)$ for $(i,j) \in E$,
	\item[Flow conservation:] $\sum_{j; (i,j) \in E} \tau_{ij} = 0 $ for $i \in V\setminus \{r,s\}$.
\end{description}
The {\it value} $\abs{\tau}$ of a flow $\tau$ through $r$ to $s$ is the net flow out of the source $r$ or that into the sink $s$ formulated as follows:
\begin{equation*}
\abs{\tau} = \sum_{i; (r,i) \in E} \tau_{ri} = \sum_{i; (i,s) \in E} \tau_{is},
\end{equation*}
where the last equality holds from the flow conservation.
The {\it maximum flow problem} on $G=\{(V,E),c,r,s\}$ is a problem to find the flow that attains the maximum value $\max_\tau \abs{\tau}$, called the {\it maximum flow}.
For an overview of the theory of the maximum flow problems, see e.g.~\cite{tarjan1983network}.

Then, we obtain the following lemma relating the condition \eqref{lemma1-subgradequation} to the maximum flow problem on $G=\{(V,E),c,r,s\}$. 
\begin{lemma} \label{lemma11}
	There exist $\tau_{i0} \in \dkakko{-\lambda_{i0}, \lambda_{i0}}$ and $\tau_{ij} = -\tau_{ji} \in \dkakko{-\lambda_{ij}, \lambda_{ij}}$ such that
	\begin{equation} \label{lemma11-subgradequation}
	f_i + \tau_{i0} + \sum_{j \in \{1,\dots,m\} \setminus\ckakko{i}}\tau_{ij} =0, \qquad i=1,\dots,m,
	\end{equation}
	if and only if the maximum value of a flow through $r$ to $s$ on $G=\{(V,E),c,r,s\}$ is $\sum_{i=0}^m f_i^{+} = \sum_{i=0}^m f_i^{-}$.
\end{lemma}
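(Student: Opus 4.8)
The plan is to exhibit an explicit correspondence between decompositions satisfying \eqref{lemma11-subgradequation} and flows on $G$ that saturate every edge incident to the source $r$ and every edge incident to the sink $s$. The starting observation is that the cut separating $\{r\}$ from the remaining vertices has capacity $\sum_{i=0}^m c(r,i)=\sum_{i=0}^m f_i^{-}$, while the cut isolating $\{s\}$ has capacity $\sum_{i=0}^m f_i^{+}$; since $\sum_{i=0}^m f_i = 0$ these totals coincide, so the value bound gives $\abs{\tau}\le \sum_{i=0}^m f_i^{-}=\sum_{i=0}^m f_i^{+}$ for every flow. Consequently, proving that this value is attained is the same as producing one flow of value $\sum_{i=0}^m f_i^{-}$, and any such flow is forced to saturate all source and sink edges, because its value already equals the total source (and sink) capacity.

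For the forward direction I would take a decomposition $\{\tau_{i0},\tau_{ij}\}$ and build a flow by retaining the interior values $\tau_{ij}$ for $i,j\in\{1,\dots,m\}$, setting $\tau_{0i}=-\tau_{i0}$, and saturating the external edges via $\tau_{ri}=f_i^{-}$, $\tau_{is}=f_i^{+}$ for $i=0,\dots,m$. Skew symmetry and the capacity bounds $\abs{\tau_{i0}}\le\lambda_{i0}$, $\abs{\tau_{ij}}\le\lambda_{ij}$ are immediate from the hypotheses, so the only thing to verify is flow conservation. At a vertex $i\in\{1,\dots,m\}$ the conservation sum equals $-f_i^{-}+f_i^{+}+\tau_{i0}+\sum_{j\neq i}\tau_{ij}=f_i+\tau_{i0}+\sum_{j\neq i}\tau_{ij}=0$ by \eqref{lemma11-subgradequation}. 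At vertex $0$ conservation reads $f_0-\sum_{i=1}^m\tau_{i0}=0$, which follows by summing the $m$ equations \eqref{lemma11-subgradequation} — whose interior $\tau_{ij}$ terms cancel by skew symmetry — to obtain $\sum_{i=1}^m\tau_{i0}=-\sum_{i=1}^m f_i=f_0$. This flow saturates the source and sink, so $\abs{\tau}=\sum_{i=0}^m f_i^{-}$ and the maximum value is attained.

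For the reverse direction I would take a flow of value $\sum_{i=0}^m f_i^{-}$ and read off the decomposition. Since the value equals the total source capacity and each $\tau_{ri}\le f_i^{-}$, every source edge must be saturated, $\tau_{ri}=f_i^{-}$; symmetrically $\tau_{is}=f_i^{+}$. Substituting these saturated external flows into the conservation identity at each $i\in\{1,\dots,m\}$ recovers exactly \eqref{lemma11-subgradequation}, while the capacity constraints on the interior edges supply the required bounds $\tau_{i0}\in[-\lambda_{i0},\lambda_{i0}]$ and $\tau_{ij}\in[-\lambda_{ij},\lambda_{ij}]$.

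I expect the main obstacle to be purely organizational rather than conceptual: correctly handling the auxiliary vertex $0$, whose external capacities involve $f_0=-\sum_{j=1}^m f_j$, and confirming that its conservation is not an extra requirement but an automatic consequence of the other $m$ equations together with the definition of $f_0$. Once that consistency is pinned down, the equivalence rests only on the elementary fact that a flow whose value meets the total source (equivalently sink) capacity must saturate every incident external edge.
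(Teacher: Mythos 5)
Your argument is correct and is essentially the paper's own proof: both rest on the observation that the flow value is capped by the total source (equivalently sink) capacity $\sum_{i=0}^m f_i^{-}=\sum_{i=0}^m f_i^{+}$, that this cap is attained exactly when every source and sink edge is saturated, and that under saturation the conservation equations at vertices $1,\dots,m$ coincide with \eqref{lemma11-subgradequation} while conservation at vertex $0$ follows automatically by summing them. The paper merely compresses your two directions into a single ``attained if and only if saturated'' step, so there is no substantive difference.
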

\begin{proof}[proof]
Consider a flow $\tau = \{ \tau_{ij} ; (i,j)\in E\}$ on the flow network $G=\{(V,E),c,r,s\}$ defined above. 
From the capacity constraint, the value $\abs{\tau} = \sum_{i=0}^m \tau_{ri} = \sum_{i=0}^m \tau_{is}$ of the flow 
cannot be more than $\sum_{i=0}^m c(r,i) = \sum_{i=0}^m c(i,s) = \sum_{i=0}^m f_i^{+} = \sum_{i=0}^m f_i^{-}$.
Therefore, this value of the flow is attained if and only if $\tau_{ri} = c(r,i) = f_i^{-}, \tau_{is} = c(i,s) = f_i^{+}$ for all $i=0,\dots,m$,
and hence 
\begin{equation} \label{consflow}
\sum_{j\in V\setminus \{i\}} \tau_{ij} = f_i + \tau_{i0} + \sum_{j \in \{1,\dots,m\} \setminus\ckakko{i}}\tau_{ij} = 0, \qquad i = 1,\dots,m,
\end{equation}
from the flow conservation where $\tau_{i0} \in \dkakko{-\lambda_{i0}, \lambda_{i0}}$ and $\tau_{ij} = -\tau_{ji} \in \dkakko{-\lambda_{ij}, \lambda_{ij}}$ 
from the skew symmetry and the capacity constraint.
Furthermore, the flow conservation at the vertex zero $\sum_{i\in V\setminus \{0\}} \tau_{0i} = f_0 + \sum_{i=1}^m \tau_{0i} = - \sum_{i=1}^m f_i - \sum_{i=1}^m \tau_{i0} = 0$
follows from the sum of \eqref{consflow} over $i \in \{1,\dots,m\}$, completing the proof.
\end{proof}
Note that the equation \eqref{lemma11-subgradequation} appears in subgradient equation within a fused group for the weighted clustered Lasso problem
with penalty terms $\sum_{i=1}^m \lambda_{i0}|\beta_i| + \sum_{1 \le i < j \le m} \lambda_{ij}|\beta_i-\beta_j| $.
We can use Lemma \ref{lemma11} to check the subgradient equation by seeking the maximum flow on the corresponding flow network.
However, in the weighted clustered Lasso problem, it is generally difficult to find when the subgradient equation is violated as the regularization parameters grow.
We can provide the explicit condition to check the violation of \eqref{lemma1-subgradequation} for only the ordinary clustered Lasso problem as in Theorem \ref{theorem1}, whose proof is provided as follows.
\begin{proof}[proof of Theorem \ref{theorem1}]
Here we consider the {\it minimum cut problem} on the flow network $G=\{(V,E),c,r,s\}$ where $\lambda_{i0} = \lambda_1$ and $\lambda_{ij}=\lambda_2$ for all $i\neq j \in \{1,\dots,m\}$.
A {\it cut} of the graph is a partition of the vertex set $V$ into two parts $V_r$ and $V_s = V \setminus V_r$ such that $r\in V_r$ and $s\in V_s$.
Then, the {\it capacity} of the cut is defined by the sum of capacities on edges $(i,j)\in E$ such that $i\in V_r$ and $j\in V_s$.
In this proof, we use the max-flow min-cut theorem (see e.g.~\cite{tarjan1983network}), that states 
the maximum value of a flow through $r$ to $s$ on $G=\{(V,E),c,r,s\}$ is equal to the minimum capacity of a cut of the same graph.
Thus, from that theorem and Lemma \ref{lemma11}, it suffices to prove that the minimum capacity of a cut of $G=\{(V,E),c,r,s\}$
is just $\sum_{i=0}^m f_i^{+} = \sum_{i=0}^m f_i^{-}$ if and only if \eqref{lowerbound} and \eqref{upperbound} hold.

For $V_r = \{r\},V_s = V\setminus \{r\}$ and $V_r=V\setminus \{s\} ,V_s=\{s\}$, 
the capacity of the cut attains $\sum_{i=0}^m f_i^{+} = \sum_{i=0}^m f_i^{-}$, which is supposed to be the minimum capacity of a cut.
For the other cuts, let $k$ and $m-k$ denote the cardinality of $\tilde{V}_s = V_s \cap \{1,\dots,m\}$ and $\tilde{V}_r = V_r \cap \{1,\dots,m\}$, respectively.
Then, the capacity of the cut $C(V_r,V_s)$ is obtained by, if $0\in V_s$, 
\begin{align} 
C(V_r,V_s) &= \sum_{i\in V_r, j\in V_s, (i,j)\in E} c(i,j) \nonumber \\
&= \sum_{i \in \tilde{V}_r} c(i,s) + \sum_{i \in \tilde{V}_s} c(r,i) + c(r,0) +
\sum_{i \in \tilde{V}_r} c(i,0) + \sum_{i \in \tilde{V}_r, j\in \tilde{V}_s} c(i,j)\nonumber  \\
&= \sum_{i \in \tilde{V}_r} f_i^{+} + \sum_{i \in \tilde{V}_s} f_i^{-} + f_0^{-} +
\sum_{i \in \tilde{V}_r} \lambda_1 + \sum_{i \in \tilde{V}_r, j\in \tilde{V}_s} \lambda_2\nonumber  \\
&= \sum_{i \in \tilde{V}_r} f_i^{+} + \sum_{i \in \tilde{V}_s} f_i^{-} + f_0^{-} + (m-k)\lambda_1 + k(m-k)\lambda_2, \label{cut-vszero}
\end{align}
and otherwise
\begin{align} 
C(V_r,V_s) &= \sum_{i \in \tilde{V}_r} c(i,s) + \sum_{i \in \tilde{V}_s} c(r,i) + c(0,s) + 
\sum_{i \in \tilde{V}_s} c(0,i) + \sum_{i \in \tilde{V}_r, j\in \tilde{V}_s} c(i,j)\nonumber  \\
&= \sum_{i \in \tilde{V}_r} f_i^{+} + \sum_{i \in \tilde{V}_s} f_i^{-} + f_0^{+} + 
\sum_{i \in \tilde{V}_s} \lambda_1 + \sum_{i \in \tilde{V}_r, j\in \tilde{V}_s} \lambda_2\nonumber  \\
&= \sum_{i \in \tilde{V}_r} f_i^{+} + \sum_{i \in \tilde{V}_s} f_i^{-} + f_0^{+} + k\lambda_1 + k(m-k)\lambda_2. \label{cut-vrzero}
\end{align}
Therefore, since $f_1 \ge f_2 \ge \cdots \ge f_m$, $C(V_r,V_s)$ is bounded by
\begin{equation} \label{cut-vszerobound}
C(V_r,V_s) \ge \sum_{i=k+1}^m f_i^{+} + \sum_{i=1}^k f_i^{-} + f_0^{-} + (m-k)\lambda_1 + k(m-k)\lambda_2,
\end{equation}
from \eqref{cut-vszero} and
\begin{equation}  \label{cut-vrzerobound}
C(V_r,V_s) \ge \sum_{i=k+1}^m f_i^{+} + \sum_{i=1}^k f_i^{-} + f_0^{+} + k\lambda_1 + k(m-k)\lambda_2,
\end{equation}
from \eqref{cut-vrzero}.
The equalities in both \eqref{cut-vszerobound} and \eqref{cut-vrzerobound} hold when $\tilde{V}_s  = \{1,\dots,k\}$ and $\tilde{V}_r = \{k+1,\dots,m\}$.
Thus, $C(V_r,V_s) \ge \sum_{i=0}^m f_i^{+} = \sum_{i=0}^m f_i^{-}$ for any cut if and only if
\eqref{lowerbound} and \eqref{upperbound} hold, completing the proof.
\end{proof}
Corollary \ref{corollary1} is derived from Theorem \ref{theorem1} as follows.
\begin{proof}[proof of Corollary \ref{corollary1}]
By taking $\lambda_1=0$ in Theorem \ref{theorem1}, the equation \eqref{lemma1-subgradequation} reduces to \eqref{coro1}
and the inequality \eqref{lowerbound} reduces to \eqref{coro2}.
Furthermore, combining the inequalities \eqref{lowerbound} for $k=m$ and \eqref{upperbound} for $k=0$, we obtain $\sum_{j=1}^{m} f_j = 0$.
Then, the inequality \eqref{upperbound} for $k=1,\dots,m-1$ holds because 
\begin{equation*}
\sum_{j=k+1}^{m} f_j = \sum_{j=1}^{m} f_j - \sum_{j=1}^{k} f_j = - \sum_{j=1}^{k} f_j \ge -\lambda_2 k(m-k),
\end{equation*}
from \eqref{coro2}, completing the proof.
\end{proof}

\section{Proof of Theorem 2} \label{appendix-proof2}
This section provide a proof of Theorem \ref{theorem2}.
For the proof of Theorem 2, we introduce the following flow network $G=\{(V,E),c,r,s\}$ defined differently from that in Theorem \ref{theorem1}. 
\begin{description}
	\item[Vertices:] 
        Define the vertices by $V = U \cup W \cup \{r,s\}$ where $U = \{u_1,\dots,u_m\}$ and $W=\{ w_{ij}; 1\le i < j \le m \}$.        
	\item[Edges:] 
        Define the edges by $E = \ckakko{(r,u_i); i =1,\dots,m} \cup \ckakko{(u_i,s); i =1,\dots,m} \cup \ckakko{(u_i,w_{jk}); i\in \{j,k\},1\le j < k \le m } \cup \ckakko{(w_{ij},s); 1\le i < j \le m }$.
	\item[Capacities:] Define the capacities on the edges in $E$ by
	\begin{align*}
	c(r,u_i) &= \abs{f_i}, & i=1,\dots,m, \\
	c(u_i,s) &= \lambda_{i0}, & i=1,\dots,m, \\
	c(u_i,w_{ij}) &= c(u_j,w_{ij}) = \lambda_{ij}, & 1\le i < j \le m, \\
	c(w_{ij},s) &= \lambda_{ij}, & 1\le i < j \le m,
	\end{align*}
        and, for convenience, $c(v,v^\prime) = 0$ if $(v,v^\prime) \notin E$ but $(v^\prime,v) \in E$. 
\end{description}
Then, we obtain the following lemma relating the condition \eqref{lemma2-subgradequation} to the maximum flow problem on $G=\{(V,E),c,r,s\}$. 
\begin{lemma} \label{lemma21}
	There exist $\xi_{i0} \in \dkakko{-\lambda_{i0}, \lambda_{i0}}$ and $\xi_{ij},\xi_{ji} \in \dkakko{-\lambda_{ij}, \lambda_{ij}}$ such that $\abs{\xi_{ij}} + \abs{\xi_{ji}} \le \lambda_{ij}$ and 
	\begin{equation} \label{lemma21-subgradequation}
	f_i + \xi_{i0} + \sum_{j \in \{1,\dots,m\} \setminus\ckakko{i}}\xi_{ij} =0, \qquad i=1,\dots,m, 
	\end{equation}
	if the maximum value of a flow through $r$ to $s$ on $G=\{(V,E),c,r,s\}$ is $\sum_{i=1}^m \abs{f_i}$.
\end{lemma}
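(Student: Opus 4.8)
The plan is to follow the template of the proof of Lemma \ref{lemma11}, reading the subgradients off a maximizing flow, but now exploiting the auxiliary vertices $w_{ij}$ to encode the $L_\infty$ constraint. Since only the ``if'' direction is asserted, I start from a flow $\tau$ attaining value $\sum_{i=1}^m \abs{f_i}$. Because the edges leaving the source have total capacity $\sum_{i=1}^m c(r,u_i) = \sum_{i=1}^m \abs{f_i}$, such a flow must saturate every one of them, so $\tau_{r,u_i} = \abs{f_i}$ for all $i$. The convention $c(v,v')=0$ for reversed edges forces, via skew symmetry and the capacity constraint, every forward flow along $(u_i,s)$, $(u_i,w_{ij})$ and $(w_{ij},s)$ to be nonnegative, so these flow values are genuine magnitudes.

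First I would invoke flow conservation at each interior vertex $u_i$, which splits the saturated inflow among the direct edge to the sink and the pairwise edges: $\tau_{u_i,s} + \sum_{j\neq i}\tau_{u_i,w_{ij}} = \abs{f_i}$. I then define candidate subgradients by restoring the sign of $f_i$, setting $\xi_{i0} := -\sign{f_i}\,\tau_{u_i,s}$ and $\xi_{ij} := -\sign{f_i}\,\tau_{u_i,w_{ij}}$ (all taken to be zero when $f_i=0$, which is consistent because $\tau_{r,u_i}=0$ then forces every flow out of $u_i$ to vanish). Substituting into the left-hand side of \eqref{lemma21-subgradequation} and using the conservation identity collapses it to $f_i - \sign{f_i}\abs{f_i} = 0$, so \eqref{lemma21-subgradequation} holds for each $i$.

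It remains to check the three constraints. The membership $\xi_{i0}\in\dkakko{-\lambda_{i0},\lambda_{i0}}$ and $\xi_{ij}\in\dkakko{-\lambda_{ij},\lambda_{ij}}$ follow at once from the capacity constraints $\tau_{u_i,s}\le c(u_i,s)=\lambda_{i0}$ and $\tau_{u_i,w_{ij}}\le c(u_i,w_{ij})=\lambda_{ij}$. The decisive constraint $\abs{\xi_{ij}}+\abs{\xi_{ji}}\le\lambda_{ij}$ is precisely where the vertex $w_{ij}$ earns its keep: flow conservation at $w_{ij}$ gives $\tau_{u_i,w_{ij}}+\tau_{u_j,w_{ij}}=\tau_{w_{ij},s}$, and its single outgoing edge caps this sum at $c(w_{ij},s)=\lambda_{ij}$, so $\abs{\xi_{ij}}+\abs{\xi_{ji}}=\tau_{u_i,w_{ij}}+\tau_{u_j,w_{ij}}\le\lambda_{ij}$.

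I expect the main obstacle to be conceptual rather than computational: identifying the correct flow-to-subgradient dictionary, and in particular recognizing that the shared bottleneck capacity $\lambda_{ij}$ on the outgoing edge of $w_{ij}$ is what enforces the $L_\infty$ subgradient constraint $\abs{\xi_{ij}}+\abs{\xi_{ji}}\le\lambda_{ij}$. This replaces the skew-symmetric single-edge encoding of Lemma \ref{lemma11}, where $\xi_{ij}=-\xi_{ji}$ held automatically; here the two subgradients are independent and are tied together only through the vertex $w_{ij}$. Once this correspondence is set up, verifying the equation and the bounds is routine bookkeeping with signs and capacities.
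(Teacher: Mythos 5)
Your proposal is correct and follows essentially the same route as the paper's proof: saturate the source edges, use flow conservation at each $u_i$ to recover the subgradient equation via $\xi_{ij} = -\sign{f_i}\,\tau_{ij}$, and use conservation at $w_{ij}$ together with the capacity of its single outgoing edge to obtain $\abs{\xi_{ij}}+\abs{\xi_{ji}}\le\lambda_{ij}$. Your treatment is if anything slightly more careful than the paper's (explicit handling of the $f_i=0$ case and of why the forward flows are nonnegative).
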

\begin{proof}[proof]
Let $\tau_{ij}$ and $\tau_{ji}$ denote the flows from $u_i$ and $u_j$ to $w_{ij}$, respectively, bounded by $0\le \tau_{ij}, \tau_{ji}\le \lambda_{ij}$ 
from their capacity constraints.
Then, from the flow conservation at $w_{ij}$, the flow from $w_{ij}$ to the sink $s$ must be $\tau_{ij} + \tau_{ji}$,
which is also bounded by $0\le \tau_{ij} + \tau_{ji}\le \lambda_{ij}$.
We also denote by $\tau_{i0}$ a flow from $u_i$ to $s$, bounded by $0 \le \tau_{i0} \le \lambda_{i0}$ from its capacity constraint.
From the capacity constraint, the value of a flow cannot be more than $\sum_{i=1}^m \abs{f_i}$,
which is attained if and only if the flows from the source $r$ to $u_i$ is $\abs{f_i}$ for all $u_i\in U$
and hence, from the flow conservation at $u_i\in U$, $-\abs{f_i} + \tau_{i0} + \sum_{j \in \{1,\dots,m\} \setminus\ckakko{i}}\tau_{ij} =0$.
If such a flow exist,  
\eqref{lemma21-subgradequation} is satisfied by setting $\xi_{ij} = s_i\tau_{ij}$ for $i \in \{1,\dots,m\}, j\in \{0,\dots,m\}\setminus \{i\}$ where $s_i = -\sign{f_i}$.
\end{proof}
Note that the equation \eqref{lemma21-subgradequation} appears in the subgradient equation within the fused group of zeros for the weighted OSCAR problem
with penalty terms $\sum_{i=1}^m \lambda_{i0}|\beta_i| + \sum_{1 \le i < j \le m} \lambda_{ij}\max\{|\beta_i|,|\beta_j|\} $.
We can use Lemma \ref{lemma21} to check the subgradient equation by seeking the maximum flow on the corresponding flow network.
However, in the weighted OSCAR problem, it is generally difficult to find when the subgradient equation is violated as the regularization parameters grow.
We can only provide the explicit condition to check the violation of \eqref{lemma1-subgradequation} for the ordinary OSCAR problem as in Theorem \ref{theorem2}, whose proof is provided as follows. 
\begin{proof}[proof of Theorem \ref{theorem2}]
First, it is easy to verify that \eqref{lemma2-subgradequation} implies \eqref{lemma2-condition} as follows:
\begin{equation*}
\begin{split}
\sum_{i=k+1}^{m} \abs{f_i} &\le \sum_{i=k+1}^{m}\kakko{\abs{\xi_{i0}} + \sum_{j\neq i}\abs{\xi_{ij}}} \\
&= \sum_{i=k+1}^{m} \abs{\xi_{i0}} + \sum_{k+1\le i<j\le m}(\abs{\xi_{ij}}+\abs{\xi_{ji}}) + \sum_{i=k+1}^{m} \sum_{j=1}^{k} \abs{\xi_{ij}} \\
&\le \lambda_1(m-k) + \lambda_2 \frac{(m-k)(m-k-1)}{2} + \lambda_2 k(m-k).
\end{split}
\end{equation*}
Consider the minimum cut problem on the same flow network $G=\{(V,E),c,r,s\}$ where $\lambda_{i0} = \lambda_1$ and $\lambda_{ij}=\lambda_2$ for all $i\neq j \in \{1,\dots,m\}$.
Let $V_r, V_s \subset V$ denote a cut of the graph such that $V_r \cup V_s = V$, $V_r \cap V_s = \emptyset$, $r\in V_r$, $s\in V_s$.
We also denote $U_r = V_r \cap U$, $U_s = V_s \cap U$, $W_r = V_r \cap W$ and $W_s = V_s \cap W$.
From the max-flow min-cut theorem and Lemma \ref{lemma21}, the minimum capacity of a cut of $G=\{(V,E),c,r,s\}$ has to be just $\sum_{i=1}^m \abs{f_i}$.
When $V_r = \{r\}$ and $V_s = V\setminus \{r\}$, the capacity of the cut attains $\sum_{i=1}^m \abs{f_i}$.
Thus, it suffices to prove that capacity of any other cuts of the graph cannot be less than $\sum_{i=1}^m \abs{f_i}$ 
if \eqref{lemma2-condition} holds. 

The capacity $C(V_r,V_s)$ of the cut can be decomposed as follows:
\begin{align} 
C(V_r,V_s) &= \sum_{v\in V_r, v^\prime \in V_s, (v,v^\prime)\in E} c(v,v^\prime) \nonumber \\
&= \sum_{u \in U_s} c(r,u) + \sum_{u \in U_r} c(u,s) +
\sum_{u\in U_r, w\in W_s, (u,w)\in E} c(u,w) + \sum_{w \in W_r} c(w,s). \label{cut-oscar}
\end{align}
Since $\abs{f_1}\le \abs{f_2} \le \cdots \le \abs{f_m}$, the first and second terms in \eqref{cut-oscar} are bounded by 
\begin{equation*} 
\sum_{u \in U_s} c(r,u) + \sum_{u \in U_r} c(u,s) \ge \sum_{i=1}^k \abs{f_i} + \lambda_1 (m-k),
\end{equation*}
where $k$ is the cardinality of $U_s$. To bound the rest terms in \eqref{cut-oscar}, 
let $C(V_r,V_s;w_{ij})$ denote the capacity of the cut within the edges including $w_{ij}$ formulated as follows:
\begin{equation*} 
C(V_r,V_s;w_{ij}) = \begin{cases}
c(w_{ij},s) & \text{if $w_{ij} \in W_r$}, \\
\sum_{u\in \{u_i,u_j\} \cap U_r} c(u,w_{ij}) & \text{if $w_{ij} \in W_s$}.
\end{cases}
\end{equation*}
Then, the third and fourth terms in \eqref{cut-oscar} are represented by $\sum_{u\in U_r, w\in W_s, (u,w)\in E} c(u,w) + \sum_{w \in W_r} c(w,s) = \sum_{w\in W} C(V_r,V_s;w)$. 
Furthermore, $C(V_r,V_s;w_{ij})$ can be evaluated as follows:
\begin{enumerate}
	\item If $u_i, u_j \in U_r$, we have
		\begin{equation*}
			C(V_r,V_s;w_{ij}) = \begin{cases}
				\lambda_2 & \text{if $w_{ij} \in W_r$}, \\
				2\lambda_2 & \text{if $w_{ij} \in W_s$}.
			\end{cases}
		\end{equation*}
	\item If $u_i, u_j \in U_s$, we have
		\begin{equation*}
			C(V_r,V_s;w_{ij}) = \begin{cases}
				\lambda_2 & \text{if $w_{ij} \in W_r$}, \\
				0 & \text{if $w_{ij} \in W_s$}.
			\end{cases}
		\end{equation*}
	\item Otherwise, we have $C(V_r,V_s;w_{ij}) = \lambda_2$ whichever $w_{ij} \in W_r$ or $w_{ij} \in W_s$.
\end{enumerate}
Thus, since we have $\frac{(m-k)(m-k-1)}{2}$, $\frac{k(k+1)}{2}$ and $k(m-k)$ cases for (i), (ii) and (iii), respectively,
the capacity $C(V_r,V_s)$ of the cut is bounded by
\begin{equation*} 
C(V_r,V_s) \ge \sum_{i=1}^k \abs{f_i} + \lambda_1 (m-k) + \lambda_2 \frac{(m-k)(m-k-1)}{2} + \lambda_2 k(m-k),
\end{equation*}
where the equality holds when we set, for example, $U_s = \{u_1,\dots,u_k\}$ and $W_s=\{ w_{ij}; 1\le i<j\le k \}$.
Therefore, $C(V_r,V_s) \ge \sum_{i=1}^m \abs{f_i}$ for any cut if \eqref{lemma2-condition} holds, completing the proof.
\end{proof}

\section{Complexity of the path algorithms} \label{appendix-blockmatrix}
The total computational cost of our path algorithms for clustered Lasso and OSCAR is $\mathcal{O}(np^2 + (T_\text{fuse}+ T_\text{split})np + T_\text{switch} n)$ time where
$T_\text{fuse}$, $T_\text{split}$ and $T_\text{switch}$ are the numbers of iterations in which fusing, splitting and switching events occur, respectively.
The first term $\mathcal{O}(np^2)$ is required to obtain the initial solution $\beta^{(0)} = (X^\top X)^{-1}X^\top y$.
In the following subsections, we derive the complexity per iteration for each event type.



\subsection{Complexity per iteration for fusing/splitting events}
In this subsection, we discuss the complexity of the iteration where a fusing/splitting event occurs. 
More specifically, we evaluate the computational cost of updating $\group$, $\betag$, $\Xg$, $o(\cdot)$, $[(\Xz)^\top \Xz]^{-1}$, $\frac{\D \betag}{\D \eta}$ and $\frac{\D \beta}{\D \eta}$
by the fusion/split of the groups and then calculating the next timings of the events.


First, we focus on the update of $[(\Xz)^\top \Xz]^{-1}$ where a few columns in $\Xz$ are replaced by a fusing/splitting event.
When the set $\mathcal{G}$ of fused groups is changed into $\mathcal{\tilde{G}}$ by a fusing/splitting event,
let $X^{\mathcal{G}} = [X_{(1)},X_{(2)}]$ and $X^{\mathcal{\tilde{G}}} = [X_{(1)},\tilde{X}_{(2)}]$ denote
the grouped design matrices sharing some columns $X_{(1)} \in \mathbb{R}^{n\times p_{(1)}}$ 
but having different ones $X_{(2)}\in \mathbb{R}^{n\times p_{(2)}}$ and $\tilde{X}_{(2)}\in \mathbb{R}^{n\times \tilde{p}_{(2)}}$, respectively. 
We can permute the columns of $X^{\mathcal{G}}$ and $X^{\mathcal{\tilde{G}}}$ to apply those notations 
and recover their original orders after the update. 
Note that, because at most two groups are involved in a fusing/splitting event, 
the number of columns replaced by a fusing/splitting event cannot exceed two, that is, $p_{(2)},\tilde{p}_{(2)}\le 2$.
Moreover, because we assume $n\ge p$ and $\rank{X} = p$, 
we have $n > p_{(1)}$ and the inverse of $(\Xz)^\top \Xz$ always exists. 

Then, we can update $Z = [(\Xz)^\top \Xz]^{-1}$ into $\tilde{Z} = [(\Xzt)^\top \Xzt]^{-1}$ by using the following lemma:
\begin{lemma} \label{lemma3}
	Given $X_1,X_2,\tilde{X}_2$ and $Z = [(\Xz)^\top \Xz]^{-1}$ decomposed in a block matrix
\begin{equation*} 
 Z = \begin{bmatrix} Z_{11} & Z_{12} \\ Z_{21} & Z_{22} \end{bmatrix},
\end{equation*}
        with $Z_{11} \in \mathbb{R}^{p_{(1)}\times p_{(1)}}$, $Z_{12}=Z_{21}^\top \in \mathbb{R}^{p_{(1)}\times p_{(2)}}$ and $Z_{22} \in \mathbb{R}^{p_{(2)}\times p_{(2)}}$.
        Then, $\tilde{Z} = [(\Xzt)^\top \Xzt]^{-1}$ is obtained by
\begin{equation} \label{zinverse}
 \tilde{Z} = \begin{bmatrix} U+UVWV^\top U & -UVW \\ -WV^\top U & W \end{bmatrix},
\end{equation}
        where $U = Z_{11}-Z_{12}Z_{22}^{-1}Z_{21}$, $V = X_{(1)}^\top \tilde{X}_{(2)}$ and $W = (\tilde{X}_{(2)}^\top \tilde{X}_{(2)}- V^\top UV)^{-1}$.
        Moreover, the computation of \eqref{zinverse} requires $\mathcal{O}(np_{(1)})$ time.
\end{lemma}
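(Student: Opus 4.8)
The plan is to establish the identity by the block matrix inversion formula and then count operations, exploiting that $p_{(2)},\tilde{p}_{(2)}\le 2$ are bounded by a constant. First I would observe that $(\Xz)^\top\Xz$ and $(\Xzt)^\top\Xzt$ share the same top-left block $X_{(1)}^\top X_{(1)}$, since both grouped design matrices begin with the common columns $X_{(1)}$. The central step is to show that the quantity $U = Z_{11}-Z_{12}Z_{22}^{-1}Z_{21}$, namely the Schur complement of the $(2,2)$ block inside $Z$, equals $(X_{(1)}^\top X_{(1)})^{-1}$. This follows from the standard fact that taking the Schur complement of the lower-right block of an inverse recovers the inverse of the upper-left block of the original matrix. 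Concretely, writing the block inverse of $(\Xz)^\top\Xz$ in terms of $A=X_{(1)}^\top X_{(1)}$ and its Schur complement $S_0 = X_{(2)}^\top X_{(2)} - X_{(2)}^\top X_{(1)}A^{-1}X_{(1)}^\top X_{(2)}$, one gets $Z_{11}=A^{-1}+A^{-1}BS_0^{-1}B^\top A^{-1}$, $Z_{12}=-A^{-1}BS_0^{-1}$, $Z_{22}=S_0^{-1}$ with $B=X_{(1)}^\top X_{(2)}$, and the correction terms cancel to leave $U=A^{-1}$. I expect this cancellation to be the only real content of the correctness argument.

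With $U=(X_{(1)}^\top X_{(1)})^{-1}$ in hand, I would apply the block inverse formula to $(\Xzt)^\top\Xzt$, whose off-diagonal block is exactly $V=X_{(1)}^\top\tilde{X}_{(2)}$ and whose lower-right block is $\tilde{X}_{(2)}^\top\tilde{X}_{(2)}$. Because the top-left block is again $X_{(1)}^\top X_{(1)}$, its inverse is $U$, and the Schur complement of this block is $\tilde{X}_{(2)}^\top\tilde{X}_{(2)} - V^\top U V$, so $W$ is precisely the inverse of that Schur complement. Substituting $U$ and $W$ into the four entries of the block inverse formula then reproduces \eqref{zinverse} term by term.

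For the complexity bound I would count operations, noting that every dimension other than $n$ and $p_{(1)}$ is at most $2$. Forming $V=X_{(1)}^\top\tilde{X}_{(2)}$ costs $\mathcal{O}(np_{(1)})$ and is the only step that carries the factor $n$. Recovering $U$ from the blocks of $Z$ requires the $p_{(1)}\times p_{(1)}$ product $Z_{12}Z_{22}^{-1}Z_{21}$, which is $\mathcal{O}(p_{(1)}^2)$; the rank-$\tilde{p}_{(2)}$ correction $UVWV^\top U$ in the $(1,1)$ block is likewise $\mathcal{O}(p_{(1)}^2)$, while inverting the $2\times 2$ matrices $Z_{22}$ and $\tilde{X}_{(2)}^\top\tilde{X}_{(2)}-V^\top UV$ is $\mathcal{O}(1)$. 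Since $n\ge p\ge p_{(1)}$ gives $p_{(1)}^2\le np_{(1)}$, all these terms are dominated by $\mathcal{O}(np_{(1)})$, establishing the stated bound. The only mild obstacle is the bookkeeping of the column permutation used to bring $\Xz$ and $\Xzt$ into the assumed $[X_{(1)},\cdot]$ form and to restore the original order afterwards, which is negligible and does not affect the asymptotic cost.
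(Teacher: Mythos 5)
Your proposal is correct and follows essentially the same route as the paper's proof: both recover $U=(X_{(1)}^\top X_{(1)})^{-1}$ as the Schur complement of $Z_{22}$ in $Z$, then apply the standard block inversion formula to $(\Xzt)^\top\Xzt$ with $W$ as the inverse of the new Schur complement, and both obtain the $\mathcal{O}(np_{(1)})$ bound by noting that $V$ is the only computation carrying the factor $n$ while everything else is $\mathcal{O}(p_{(1)}^2)\le\mathcal{O}(np_{(1)})$. The explicit cancellation you write out for $Z_{11}-Z_{12}Z_{22}^{-1}Z_{21}=A^{-1}$ is simply cited in the paper as a consequence of block matrix inversion, but the content is identical.
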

\begin{proof}[proof]
From block matrix inversion of $\tilde{Z} = [(\Xzt)^\top \Xzt]^{-1} = ([X_{(1)},\tilde{X}_{(2)}]^\top [X_{(1)},\tilde{X}_{(2)}])^{-1}$, we yield
\begin{equation} \label{blockinverse}
 \tilde{Z} = \begin{bmatrix} A & V \\ V^\top & D \end{bmatrix}^{-1}=\begin{bmatrix} A^{-1}+A^{-1}VS^{-1}V^\top A^{-1} & -A^{-1}VS^{-1} \\ -S^{-1}V^\top A^{-1} & S^{-1} \end{bmatrix},
\end{equation}
where $A = X_{(1)}^\top X_{(1)}$, $V=X_{(1)}^\top \tilde{X}_{(2)}$, $D = \tilde{X}_{(2)}^\top \tilde{X}_{(2)}$ and $S=D-V^\top A^{-1}V$.
Since we also obtain $A = X_{(1)}^\top X_{(1)} = (Z_{11}-Z_{12}Z_{22}^{-1}Z_{21})^{-1}$ from block matrix inversion of
$Z = [(\Xz)^\top \Xz]^{-1} = ([X_{(1)},X_{(2)}]^\top [X_{(1)},X_{(2)}])^{-1}$,
we yield \eqref{zinverse} by substituting $U = A^{-1}$ and $W = S^{-1}$ into \eqref{blockinverse}.

Using $p_{(2)},\tilde{p}_{(2)}\le 2$, $U$, $V$ and $W$ can be obtained in
$\mathcal{O}(p_{(1)}^2)$ time, $\mathcal{O}(np_{(1)})$ time and $\mathcal{O}(p_{(1)}^2)$ time, respectively.
Therefore, because $n > p_{(1)}$, the computation of $\tilde{Z}$ requires $\mathcal{O}(np_{(1)})$ time.
\end{proof} 
From Lemma \ref{lemma3}, $[(\Xz)^\top \Xz]^{-1}$ can be updated in $\mathcal{O}(n\abs{\group})$ time
where $\abs{\group}$ is the number of fused groups which is equal to the number of columns in $\Xg$.
Moreover, because $n\ge p$, the complexity of updating the other variables 
$\group$, $\betag$, $\Xg$, $o(\cdot)$, $\frac{\D \betag}{\D \eta}$ and $\frac{\D \beta}{\D \eta}$
is no more than that of updating $[(\Xz)^\top \Xz]^{-1}$.
Thus, updating those variables when a fusing/splitting event occurs requires $\mathcal{O}(n\abs{\group})$ time.

After updating those variables, 
the computational cost to calculate the next timings of events $\Delta_{g}^\text{fuse}, \Delta_{g, k}^\text{split}$ and $\Delta_{k}^\text{switch}$
is evaluated as follows: 
\begin{itemize}
 \item All the timings of fusing events $\Delta_{g}^\text{fuse}$ can be obtained in $\mathcal{O}(\abs{\group})$ time.
 \item To obtain the timings of splitting/switching events,
 we need to calculate $\Xg \betag$ and $\Xg \frac{\D \betag}{\D \eta}$ which requires $\mathcal{O}(n \abs{\group})$ time.
 Then, given $\Xg \betag$ and $\Xg \frac{\D \betag}{\D \eta}$, each timing of splitting/switching events can be obtained in $\mathcal{O}(n)$ time. 
 Thus, the computation of all the timings of splitting events $\Delta_{g, k}^\text{split}$ and switching events $\Delta_{k}^\text{switch}$ requires $\mathcal{O}(np)$ time.
\end{itemize}
Above all, it requires $\mathcal{O}(np)$ time to update all the variables and the next timings of events for a fusing/splitting event.

\subsection{Complexity per iteration for switching events}
Next, we discuss the complexity of the iteration where a switching event occurs. 

When the switching event which swaps the indices assigned to $o(k)$ and $o(k+1)$ in a group $G_g$ occurs,
the other variables $\group$, $\betag$, $\Xg$, $[(\Xz)^\top \Xz]^{-1}$, $\frac{\D \betag}{\D \eta}$ and $\frac{\D \beta}{\D \eta}$ than $o(k)$ and $o(k+1)$ 
are preserved as before the event.
As for the next timings of events, 
we need to calculate the following ones by their definition 
using $x_{o(k)}$, $x_{o(k+1)}$, $x_{\underline{o}(g,k)}$ and $x_{\overline{o}(g,k)}$ updated in the switching event. 
\begin{itemize}
 \item $\Delta_{g,k}^{\text{split}}$ and, for $g=0$ in clustered Lasso, $\Delta_{g,-k}^{\text{split}}$. 
 \item $\Delta_{k-1}^\text{switch}$, $\Delta_{k}^\text{switch}$ and $\Delta_{k+1}^\text{switch}$. 
\end{itemize}
Each of them can be obtained in $\mathcal{O}(n)$ time.
The remainder of the next timings of events can be updated by only subtracting the step size $\Delta_{k}^\text{switch}$ of the current switching event from them,
which only requires $\mathcal{O}(p)$ time.

Thus, it requires $\mathcal{O}(n)$ time to update the variables and the next timings of events for a switching event of a pair of indices.
Similarly, the computation for the switching event which flips the sign of $s_1$ in OSCAR also requires $\mathcal{O}(n)$.


\end{document}